\documentclass{article}

 \usepackage[preprint]{neurips_2026}

\usepackage[utf8]{inputenc} 
\usepackage[T1]{fontenc}    
\usepackage{hyperref}       
\usepackage{url}            
\usepackage{graphicx}       
\usepackage{booktabs}       
\usepackage{amsfonts}       
\usepackage{nicefrac}       
\usepackage{microtype}      
\usepackage{xcolor}         
\usepackage{amsmath}
\usepackage{amsthm}
{}
\newtheorem{proposition}{\bf Proposition}{}
{}
{}

\usepackage{algorithm}
\usepackage{algorithmic}

\title{DSSMs: State Space Models with Explicit Memory via Delay Differential Equations}

\author{%
  Yixiao Qian$^{1}$ \quad
  Song Chen$^{2,*}$ \quad
  Jiaxu Liu$^{3}$ \quad
  Shengze Cai$^{1}$ \quad
  Chao Xu$^{1,*}$\\
  $^{1}$College of Control Science and Engineering, Zhejiang University\\
  $^{2}$Department of Mathematics, National University of Singapore\\
  $^{3}$School of Mathematical Sciences, Zhejiang University\\
  \texttt{song.chen@nus.edu.sg, cxu@zju.edu.cn}\\
  $^{*}$Corresponding authors.
}

\begin{document}

\maketitle


\begin{abstract}
State Space Models (SSMs) have emerged as a powerful paradigm for efficient long-sequence modeling, offering parallel training and fast linear-time recurrent inference.
However, like other recurrent architectures, SSMs must compress an unbounded history into a fixed-size state, which limits context retention and makes precise retrieval over long-range context inherently difficult.
To overcome this limitation, we propose Delay State Space Models (DSSMs), a delay differential equation (DDE)-inspired extension of diagonal SSMs that augments discrete SSM recurrences with explicit delayed-state feedback.
Making explicit delayed feedback practical requires new stability parameterization, history management, and FFT-training tools.
We address these challenges with a practical discretization and parameterization grounded in a simple delay-independent stability condition.
To bypass direct time-domain kernel construction, we derive the DSSM transfer function and compute kernels in the frequency domain, using a kernel contour shift to suppress aliasing and recover accurate FFT training.
Empirically, DSSMs substantially improve targeted delayed-retrieval tasks while outperforming S4D on most standard sequence metrics and remaining close on the others.
\end{abstract}

\section{Introduction}

Sequence modeling is a core problem in modern deep learning.
Recently, State Space Models (SSMs) have drawn increasing attention as an efficient family of sequence models because they enable parallel training and fast linear-time inference while showing strong performance on long-sequence tasks~\citep{gu2021efficiently,gu2022parameterization,gu2024mamba}.
However, like other recurrent architectures, SSMs must compress past information into a finite recurrent state, where it is represented through exponentially decaying modes~\citep{wang2023expdecay,wang2023stablessm}, making exact copying and precise retrieval harder than in attention-based models~\citep{jelassi2024repeat}.
This limitation has motivated several complementary directions, including recall-oriented SSM layers such as H3~\citep{fu2022hungry}, more expressive state-space designs~\citep{lahoti2026mamba3}, and hybrid architectures that combine attention with SSMs, such as Jamba, Zamba, and Samba~\citep{lieber2024jamba,glorioso2024zamba,ren2024samba}.
This leaves open how to strengthen precise long-range memory while retaining efficient parallel training and fast inference: Transformers offer strong long-range memory and parallel training but incur expensive inference~\citep{shazeer2019fast,pope2023efficiently}, whereas standard SSMs achieve parallel training and efficient inference yet still struggle with precise long-range retrieval.

To overcome this bottleneck, we revisit the underlying dynamics and introduce time-delays into the state evolution.
Unlike standard ordinary differential equations (ODEs), delay differential equations (DDEs) explicitly incorporate historical states through delayed terms, thereby providing a natural mechanism for improving precise long-range memory.
Related work in scientific machine learning likewise suggests that introducing delay-aware continuous dynamics can improve expressive power and long-horizon prediction~\citep{zhu2021neural,gupta2021neural}.
Delayed dynamics, however, also create new challenges: stability analysis becomes more delicate, and delayed feedback breaks the simple time-domain kernel structure that underlies efficient FFT training in modern SSMs.
In this work, we show that both difficulties can be resolved within a practical neural sequence-modeling framework.

Our contributions can be summarized as follows:
\begin{itemize}
    \item \textbf{DDE-Inspired Delayed SSMs}: We propose Delay State Space Models (DSSMs), a DDE-inspired extension of diagonal SSMs that augments discrete SSM recurrences with explicit delayed-state feedback. We develop corresponding stability, discretization, and parameterization tools for sequence modeling.
    \item \textbf{Efficient FFT Training via Kernel Contour Shift}: We derive the exact transfer function of the delayed recurrence and develop a kernel contour shift technique that suppresses time-domain aliasing in frequency-domain training, enabling parallel training despite delayed feedback.
    \item \textbf{Empirical Validation}: We evaluate DSSMs on targeted delayed-memory tasks, efficiency benchmarks, and standard sequence modeling datasets, showing the delayed-memory mechanism and its transfer to broader sequence modeling settings.
\end{itemize}

\begin{figure*}[t]
    \centering
    \includegraphics[width=0.8\textwidth]{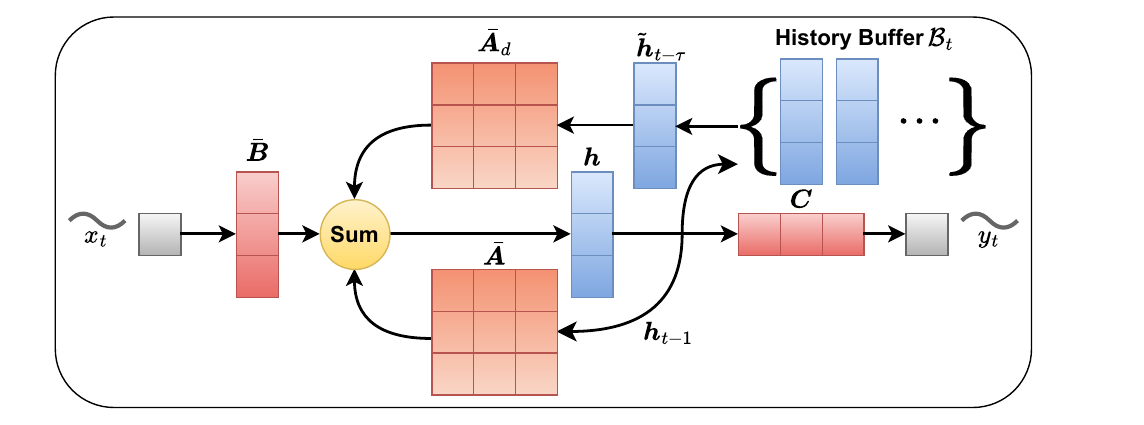}
    \caption{Overview of the discrete DSSM update. The state combines the input term, the main-branch term from $\boldsymbol{h}_{t-1}$, and the delayed-branch term from the buffered delayed state $\tilde{\boldsymbol{h}}_{t-\tau}$. 
    The formal discrete recurrence is developed in Sec.~\ref{sec:discretization}.}
    \label{fig:dssm-discrete-update}
\end{figure*}

\section{Background}
\label{sec:background}

\paragraph{Continuous State Space Models.}
SSMs describe the evolution of a continuous-time state $\boldsymbol{h}(t) \in \mathbb{C}^N$ driven by a 1-D input $x(t) \in \mathbb{R}$ and producing an output $y(t) \in \mathbb{R}$.
The system can be represented by either the linear ODE in Eq.~(\ref{eq:ssm_ode}) or by the convolution in Eq.~(\ref{eq:ssm_conv}):

\begin{minipage}{0.48\textwidth}
    \begin{equation}
        \label{eq:ssm_ode}
        \begin{aligned}
            \dot{\boldsymbol{h}}(t) &= \boldsymbol{A} \boldsymbol{h}(t) + \boldsymbol{B} x(t),\\
            y(t) &= \boldsymbol{C} \boldsymbol{h}(t).
        \end{aligned}
    \end{equation}
\end{minipage}
\hfill
\begin{minipage}{0.48\textwidth}
    \begin{equation}
        \label{eq:ssm_conv}
        \begin{aligned}
            K(t) &= \boldsymbol{C} e^{t \boldsymbol{A}} \boldsymbol{B},\\
            y(t) &= (K \ast x)(t).
        \end{aligned}
    \end{equation}
\end{minipage}

Here, $\boldsymbol{A} \in \mathbb{C}^{N \times N}$, $\boldsymbol{B} \in \mathbb{C}^{N \times 1}$, $\boldsymbol{C} \in \mathbb{C}^{1 \times N}$, $K(t) \in \mathbb{C}$, and the operations are defined as:
\begin{equation}
    e^{t\boldsymbol{A}} := \sum_{n=0}^\infty \frac{(t\boldsymbol{A})^n}{n!}, \quad (K \ast x)(t) := \int_0^t K(t - s) x(s) \mathrm{d}s.
\end{equation}

\paragraph{Discretization and Convolutional View.}
Applying the standard zero-order hold (ZOH) discretization~\citep{gu2021efficiently, gu2024mamba} to Eq.~(\ref{eq:ssm_ode}) yields the discrete recurrence:
\begin{equation}
    \label{eq:discrete_ssm_ode}
    \boldsymbol{h}_t = \bar{\boldsymbol{A}} \boldsymbol{h}_{t-1} + \bar{\boldsymbol{B}} x_t, \quad y_t = \boldsymbol{C} \boldsymbol{h}_t,
\end{equation}
where the discrete transition matrices are closed-form functions of the step size $\Delta t$:
\begin{equation}
    \label{eq:discrete_ssm_conv}
    \bar{\boldsymbol{A}} = \exp(\boldsymbol{A} \Delta t), \quad \bar{\boldsymbol{B}} = (\bar{\boldsymbol{A}} - \boldsymbol{I})\boldsymbol{A}^{-1} \boldsymbol{B}.
\end{equation}
While the recurrent form in Eq.~(\ref{eq:discrete_ssm_ode}) enables efficient $\mathcal{O}(1)$ step-by-step inference, its sequential dependence limits training parallelism. 
By the linear time-invariant (LTI) property, for an input sequence $\boldsymbol{x}$ and output sequence $\boldsymbol{y}$, the same system can be expressed as a global convolution:
\begin{equation}
    \boldsymbol{y} = \bar{\boldsymbol{K}} \ast \boldsymbol{x}, \quad \text{where } \bar{\boldsymbol{K}} = \left( \boldsymbol{C}\bar{\boldsymbol{B}}, \boldsymbol{C}\bar{\boldsymbol{A}}\bar{\boldsymbol{B}}, \boldsymbol{C}\bar{\boldsymbol{A}}^2\bar{\boldsymbol{B}}, \cdots, \boldsymbol{C}\bar{\boldsymbol{A}}^{L-1}\bar{\boldsymbol{B}} \right).
\end{equation}
The operation $\ast$ denotes the discrete convolution sum $y_t = \sum_{\ell=0}^t \bar{K}_\ell x_{t-\ell}$, where $\bar{K}_\ell:=\boldsymbol{C}\bar{\boldsymbol{A}}^{\ell}\bar{\boldsymbol{B}}$.

\paragraph{Structured SSMs.}
Applying SSMs to practical sequence modeling requires addressing three fundamental challenges:
(i) \textbf{Parameterization:} the system must be parameterized to ensure the ODE remains stable over long sequences;
(ii) \textbf{Initialization:} proper initialization of $\boldsymbol{A}$ is crucial for capturing long-term dependencies;
(iii) \textbf{Efficiency:} the representation must support parallelizable training to handle large-scale data.
S4~\citep{gu2021efficiently} established structured SSMs as a practical framework for long-range sequence modeling through a HiPPO-initialized~\citep{gu2020hippo} diagonal-plus-low-rank (DPLR) parameterization and an efficient Cauchy-kernel algorithm. 
Subsequent work further simplified and extended this framework in models such as DSS, S4D, S5, Liquid-S4, and Mamba~\citep{gupta2022diagonal,gu2022parameterization,smith2022simplified,hasani2022liquid,gu2024mamba}. 
Following the diagonal branch of this literature, we adopt diagonal transition matrices in DSSMs to preserve mode-wise decoupling and support efficient FFT-based training.

\paragraph{Memory Issues of SSMs.}
Recent empirical studies point to recall-related gaps between attention-based and state-space or recurrent models. H3 targets token recall and comparison in language modeling using shift and diagonal SSMs with multiplicative interactions~\citep{fu2022hungry}, while other work studies copying and position-structured retrieval benchmarks~\citep{arora2023zoology,jelassi2024repeat,ren2025exploring,waleffe2024empirical}. At a theoretical level, these gaps are consistent with the memory mechanism of ODE-based SSMs, which compress the entire past into a finite-dimensional recurrent state represented through exponentially decaying modes~\citep{wang2023expdecay,wang2023stablessm}. These observations motivate revisiting the underlying dynamical formulation itself.

\section{Delay State Space Models}
\label{sec:dde_ssm}

Motivated by these limitations, we revisit the underlying dynamics of SSMs. 
Standard SSMs are governed by ODEs, so past information can only be retained by being compressed into the current hidden state.
By contrast, DDEs incorporate delayed terms that provide explicit access to historical states. 
We therefore propose DSSMs as a DDE-inspired extension of diagonal SSMs that augments discrete SSM recurrences with explicit delayed-state feedback.

\subsection{Continuous Delay State Space Models}

A DSSM layer with hidden size $H$ consists of $H$ parallel channels, each with state size $N$.
We first present the single-channel formulation and return to the full layer in Sec.~\ref{sec:architecture}.

\paragraph{Continuous DSSMs.}
We first consider a single channel, modeled as the following single-input, single-output (SISO) continuous-time DDE:
\begin{equation}
    \label{eq:continuous_dde}
    \begin{aligned}
        \dot{\boldsymbol{h}}(t) = \boldsymbol{A} \boldsymbol{h}(t) + \boldsymbol{A}_d \boldsymbol{h}(t - \tau) + \boldsymbol{B} x(t), 
        \quad 
        y(t) = \boldsymbol{C} \boldsymbol{h}(t),
    \end{aligned}
\end{equation}
where $x(t) \in \mathbb{R}$, $y(t) \in \mathbb{R}$, and $\boldsymbol{h}(t) \in \mathbb{C}^N$ denote the input, output, and state, respectively. 
The learnable parameters are the state matrix $\boldsymbol{A} \in \mathbb{C}^{N \times N}$, delay matrix $\boldsymbol{A}_d \in \mathbb{C}^{N \times N}$, input projection $\boldsymbol{B} \in \mathbb{C}^{N \times 1}$, output projection $\boldsymbol{C} \in \mathbb{C}^{1 \times N}$, and time delay $\tau \in \mathbb{R}^+$. 
Following~\citep{gu2022parameterization}, we impose conjugate symmetry on the complex parameters to ensure real-valued outputs for real-valued inputs; details are provided in Appendix~\ref{sec:conjugate-symmetry}.

\paragraph{Diagonal Structure and Decoupling.} 
Following recent advances in structured SSMs, we restrict $\boldsymbol{A}$ and $\boldsymbol{A}_d$ to be diagonal. 
This decouples the $N$-dimensional system into $N$ independent scalar DDEs. 
Let $\boldsymbol{A} = \mathrm{diag}(\lambda_1, \cdots, \lambda_N)$ and $\boldsymbol{A}_d = \mathrm{diag}(\mu_1, \cdots, \mu_N)$. Then Eq.~\eqref{eq:continuous_dde} reduces to
\begin{equation}
    \dot{h}_n(t) = \lambda_n h_n(t) + \mu_n h_n(t - \tau) + b_n x(t),
    \quad n = 1,2,\cdots,N,
\end{equation}
where $h_n$ and $b_n$ denote the $n$-th components of $\boldsymbol{h}$ and $\boldsymbol{B}$, respectively.


The following sections instantiate a DDE-inspired discrete DSSM recurrence, together with its stable parameterization, history management, discretization, and initialization.

\subsection{History Management and Discretization}
\label{sec:discretization}

\paragraph{History Buffer.}
To implement delayed-state access in discrete time, we maintain a dense fixed-size buffer on the same grid as the recurrent update.
Before computing $\boldsymbol{h}_t$, the pre-update buffer stores the most recent available hidden states:
\begin{equation}
    \mathcal{B}_{t-1}
    =
    \left\{
        \boldsymbol{h}_{t-1-\ell}
        \mid
        0 \le \ell < \min(t, L_{\max})
    \right\}.
\end{equation}
After computing $\boldsymbol{h}_t$, it is written into the buffer to form $\mathcal{B}_t$.

\paragraph{Linear Interpolation.}
Since the learnable delay $\tau$ is continuous, the required delayed state generally does not lie exactly on the discrete grid.
We decompose the normalized delay as
\begin{equation}
    \frac{\tau}{\Delta t} := k + \alpha,
    \qquad
    k \in \mathbb{Z}^+, \ \alpha \in [0,1).
\end{equation}
Here, $k=\lfloor \tau/\Delta t \rfloor$ is treated as an integer index, while gradients flow through the fractional weight $\alpha$.
We define $\boldsymbol{h}_j=\boldsymbol{0}$ for all $j<0$, and use zero padding for positions outside the retained buffer:
\begin{equation}
    \hat{\boldsymbol{h}}_j
    =
    \begin{cases}
        \boldsymbol{h}_j, & \text{if } t-L_{\max} \le j < t, \\
        \boldsymbol{0}, & \text{otherwise},
    \end{cases}
\end{equation}
and approximate the delayed state by linear interpolation:
\begin{equation}
    \tilde{\boldsymbol{h}}_{t-\tau}
    =
    (1-\alpha)\hat{\boldsymbol{h}}_{t-k}
    +
    \alpha \hat{\boldsymbol{h}}_{t-k-1}.
\end{equation}

\begin{proposition}
\label{prop:discrete-delay-stability}
Consider the scalar discrete-time delayed system
\begin{equation}
    h_t = \bar{\lambda} h_{t-1} + \bar{\mu} \left[(1-\alpha) h_{t-k} + \alpha h_{t-k-1}\right],
    \qquad
    k \in \mathbb{Z}^+, \alpha \in [0,1).
\end{equation}
The zero-input system is asymptotically stable if
\begin{equation}
    |\bar{\lambda}| + |\bar{\mu}| < 1.
\end{equation}
\end{proposition}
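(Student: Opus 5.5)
The plan is a direct contraction argument on the sup-norm of a sliding window of past states; the characteristic polynomial gives an alternative route. Set $m := k+1$ and define the window maximum
\begin{equation*}
M_t := \max_{0 \le j \le k} |h_{t-j}|,
\end{equation*}
which covers every state the recurrence reads when producing $h_{t+1}$ (namely $h_t$, $h_{t+1-k}$ and $h_{t-k}$). Write $\rho := |\bar{\lambda}| + |\bar{\mu}| < 1$.

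\textbf{Step 1 (one-step bound).} By the triangle inequality, and since $(1-\alpha)+\alpha = 1$ with both weights nonnegative,
\begin{equation*}
|h_{t+1}| \le |\bar{\lambda}|\,|h_t| + |\bar{\mu}|\bigl[(1-\alpha)|h_{t+1-k}| + \alpha |h_{t-k}|\bigr] \le \rho\, M_t .
\end{equation*}
Because $\rho < 1$, this also gives $M_{t+1} \le M_t$: the window maximum is nonincreasing.

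\textbf{Step 2 (contraction over a full window).} Apply Step 1 for $k+1$ consecutive steps. Each new state $h_{t+i}$, $1 \le i \le k+1$, satisfies $|h_{t+i}| \le \rho M_{t+i-1} \le \rho M_t$. After $k+1$ steps every entry of the window $M_{t+k+1}$ is one of these new states, so $M_{t+k+1} \le \rho M_t$. Iterating,
\begin{equation*}
M_t \le \rho^{\lfloor t/(k+1) \rfloor} M_0 .
\end{equation*}
This gives exponential decay to zero from any finite initial history; for the zero-input system the initial history is arbitrary. The resulting decay to zero, together with the uniform bound $|h_t| \le M_0$, yields Lyapunov stability plus attractivity, i.e.\ asymptotic stability.

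\textbf{Step 3 (cross-check via roots).} As a complementary argument, I would view the recurrence as a finite-dimensional linear system on the augmented state $(h_t,\dots,h_{t-k})$. Its characteristic equation is
\begin{equation*}
z^{k+1} = \bar{\lambda} z^{k} + \bar{\mu}\bigl[(1-\alpha) z + \alpha\bigr].
\end{equation*}
Suppose a root had $|z| \ge 1$. Dividing by $z^{k+1}$ gives
\begin{equation*}
1 \le |\bar{\lambda}|\,|z|^{-1} + |\bar{\mu}|\bigl[(1-\alpha)|z|^{-k} + \alpha|z|^{-k-1}\bigr] \le \rho < 1,
\end{equation*}
a contradiction. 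Hence all eigenvalues lie strictly inside the unit disk.

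\textbf{Main obstacle.} The only delicate point is bookkeeping rather than substance: the window must contain all delayed indices, which requires length $k+1$. When $k=1$, the index $t+1-k$ coincides with $t$, but the bound still holds. The case $k=0$ is excluded since $k \in \mathbb{Z}^+$, though the argument would survive it anyway. The linear-interpolation weights are exactly what makes the delayed term a convex combination, which lets the single constant $|\bar{\mu}|$ appear.
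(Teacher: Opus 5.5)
Your proof is correct, and your Step~3 is the paper's proof. The paper forms the characteristic equation $z^{k+1}=\bar{\lambda}z^{k}+\bar{\mu}[(1-\alpha)z+\alpha]$, assumes a root with $|z|\ge 1$, divides by $z^{k+1}$, and uses the triangle inequality with $(1-\alpha)+\alpha=1$ to reach $1\le|\bar{\lambda}|+|\bar{\mu}|$. Your companion-matrix view of $(h_t,\dots,h_{t-k})$ also makes explicit why root location governs asymptotic stability, which the paper only asserts.

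Your main argument (Steps~1--2) is a genuinely different, purely time-domain route. Rather than locating eigenvalues, you show that the max-norm of the length-$(k+1)$ history window never increases and shrinks by the factor $|\bar{\lambda}|+|\bar{\mu}|$ every $k+1$ steps. This gives explicit quantitative information without spectral theory: the uniform bound $|h_t|\le M_0$, the rate $(|\bar{\lambda}|+|\bar{\mu}|)^{\lfloor t/(k+1)\rfloor}$, and hence the root bound $|z|\le(|\bar{\lambda}|+|\bar{\mu}|)^{1/(k+1)}$. More importantly, it never uses time invariance. It carries over verbatim to time-varying or input-dependent coefficients $\bar{\lambda}_t,\bar{\mu}_t,\alpha_t$ with $|\bar{\lambda}_t|+|\bar{\mu}_t|$ bounded uniformly below $1$. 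It also covers time-varying delays $k_t\in\{1,\dots,K\}$ if you use a window of length $K+1$. In those settings no characteristic equation exists, which matters for the selective extension the paper defers to future work.

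The paper's route is shorter and gives exactly the spectral fact used later. The denominator $1-z^{-1}\bar{\lambda}-z^{-k}D(z)\bar{\mu}$ in Proposition~\ref{prop:transfer-function} has no zeros in $|z|\ge 1$. So $K(z)$ is analytic on and outside the unit circle, which justifies sampling it on the FFT grid and, after the kernel contour shift, on a circle of radius greater than one.
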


\begin{proof}
A proof is given in Appendix~\ref{sec:proof-discrete-stability}.
\end{proof}

\paragraph{Discretization.}
To satisfy the discrete stability condition, we introduce a shared continuous base parameter $\zeta_n \in \mathbb{C}$ with $\mathrm{Re}(\zeta_n) < 0$. Its ZOH-style discretization defines the total discrete memory budget shared by the main and delayed branches:
\begin{equation}
    \bar{\zeta}_n
    =
    \exp(\zeta_n \Delta t),
    \qquad
    p_n = \mathrm{sigmoid}(r_n),
    \quad n = 1,2,\cdots,N.
\end{equation}
Its magnitude $\left|\bar{\zeta}_n\right| = \exp(\mathrm{Re}(\zeta_n)\Delta t) \in (0,1)$ gives the shared discrete budget, and $p_n$ allocates that budget to the delayed branch. We then parameterize the discrete coefficients as
\begin{equation}
    \bar{\lambda}_n
    =
    (1-p_n)\bar{\zeta}_n,
    \qquad
    \bar{\mu}_n
    =
    p_n \left|\bar{\zeta}_n\right| e^{i\theta_n}.
\end{equation}
By construction, $|\bar{\lambda}_n| = (1-p_n)|\bar{\zeta}_n|$ and $|\bar{\mu}_n| = p_n |\bar{\zeta}_n|$, hence
\begin{equation}
    |\bar{\lambda}_n| + |\bar{\mu}_n| = |\bar{\zeta}_n| < 1,
\end{equation}
which enforces the discrete stability condition mode-wise. Collecting these coefficients yields $\bar{\boldsymbol{A}} = \mathrm{diag}(\bar{\lambda}_1, \cdots, \bar{\lambda}_N)$ and $\bar{\boldsymbol{A}}_d = \mathrm{diag}(\bar{\mu}_1, \cdots, \bar{\mu}_N)$.
For the input projection, we use the standard diagonal SSM ZOH form for the non-delayed base dynamics,
\begin{equation}
    \bar{\boldsymbol{B}}
    =
    (\bar{\boldsymbol{A}} - \boldsymbol{I})\boldsymbol{A}^{-1} \boldsymbol{B},
\end{equation}
where $\boldsymbol{A} = \mathrm{diag}(\lambda_1, \cdots, \lambda_N)$ is the effective continuous-time matrix corresponding to $\bar{\boldsymbol{A}}$, with $\lambda_n := \zeta_n + \Delta t^{-1}\log(1-p_n)$. The delayed branch is parameterized directly in discrete time. The resulting vector recurrence is
\begin{equation}
    \label{eq:discrete_recurrence}
    \boldsymbol{h}_t
    =
    \bar{\boldsymbol{A}} \boldsymbol{h}_{t-1}
    +
    \bar{\boldsymbol{A}}_d \tilde{\boldsymbol{h}}_{t-\tau}
    +
    \bar{\boldsymbol{B}} x_t.
\end{equation}
Figure~\ref{fig:dssm-discrete-update} summarizes this discrete DSSM update.

\subsection{Parameterization}
\label{sec:parameterization}

\paragraph{DSSM Parameterization.}
We introduce unconstrained scalar parameters $\nu^{\Delta}, \nu^{\tau} \in \mathbb{R}$ and vector parameters $\boldsymbol{\nu}^{\text{re}}, \boldsymbol{\nu}^{\text{im}}, \boldsymbol{r}, \boldsymbol{\theta} \in \mathbb{R}^N$. Their non-bold counterparts with subscript $n$ denote the corresponding components. The model parameters are defined as follows:

\begin{itemize}
    \item \textbf{Step Size $\Delta t$}: We parameterize
    \begin{equation}
        \Delta t = \mathrm{softplus}(\nu^{\Delta}) > 0.
    \end{equation}

    \item \textbf{Shared Base Dynamics}: To ensure stability, we set
    \begin{equation}
        \zeta_n = -\exp(\nu^{\text{re}}_n) + i \nu^{\text{im}}_n,
    \end{equation}
    so that $\mathrm{Re}(\zeta_n) < 0$.

    \item \textbf{Branch Allocation and Delayed Feedback}: We define
    \begin{equation}
        \bar{\zeta}_n = \exp(\zeta_n \Delta t),
        \qquad
        p_n = \mathrm{sigmoid}(r_n),
        \quad n = 1,2,\cdots,N.
    \end{equation}
    We then split the shared discrete budget $\left|\bar{\zeta}_n\right|$ between the main and delayed branches via
    \begin{equation}
        \bar{\lambda}_n = (1-p_n)\bar{\zeta}_n,
        \qquad
        \bar{\mu}_n = p_n \left|\bar{\zeta}_n\right| e^{i\theta_n}.
    \end{equation}

    \item \textbf{Delay Location $\tau$}: We parameterize $\tau$ through a bounded ratio
    \begin{equation}
        q = \mathrm{sigmoid}(\nu^{\tau}), \qquad
        \tau = \Delta t + q \cdot (\tau_{\max} - \Delta t),
    \end{equation}
    where $\tau_{\max} = (L_{\max} - 1)\Delta t$.
\end{itemize}

\paragraph{Parameterization of $\boldsymbol{B}$.}
Following diagonal SSM practice~\citep{gu2022parameterization}, we use the explicit parameterization of $\boldsymbol{B}$; the absorbed form is mathematically equivalent, and details are provided in Appendix~\ref{sec:parameterization-equivalence}.

\subsection{Architecture and Initialization}
\label{sec:architecture}

\paragraph{Architecture.}
We instantiate the proposed DSSMs within the diagonal S4D backbone and denote the resulting model by DS4D. 
A DS4D layer with hidden size $H$ consists of $H$ parallel channels, each with $N$ diagonal state modes and one learnable delay shared across those modes.

\paragraph{Shared Base Dynamics Initialization.}
We follow the S4D initialization~\citep{gu2022parameterization} for the shared base dynamics, using either S4D-Inv or S4D-Lin:
\begin{equation}
    (\text{\textbf{S4D-Inv}}) \quad \zeta_n = -\frac{1}{2} + i \frac{N}{\pi} \left( \frac{N}{2n + 1} - 1\right),
    \qquad
    (\text{\textbf{S4D-Lin}}) \quad \zeta_n = - \frac{1}{2} + i \pi n.
\end{equation}

\paragraph{Delayed Branch Initialization.}
We initialize the delayed branch separately so that delayed feedback is present from the start of training:
\begin{equation}
    \theta_n = \omega + \epsilon_n,
\end{equation}
where $\epsilon_n$ is a small random perturbation, $\omega$ is chosen near $0$ or $\pi$, and $r_n$ is initialized so that $p_n$ starts from a small value. 
Initializing $\omega$ near $0$ biases the delayed branch toward reinforcing temporally aligned components, whereas initializing $\omega$ near $\pi$ biases it toward emphasizing mismatch or temporal change. 
Appendix~\ref{sec:phase-initialization-intuition} provides a qualitative intuition for this behavior.

\paragraph{Delay Location Initialization.}
For general sequence tasks, we initialize the normalized delay ratios on an evenly spaced grid over the admissible range:
\begin{equation}
    q_j = \epsilon + \frac{j - 1/2}{H}(1 - 2\epsilon),
    \qquad j=1,\ldots,H,
    \qquad
    \nu^{\tau}_{\pi(j)} = \log \left( \frac{q_j}{1 - q_j} \right),
\end{equation}
where $\pi$ is a random permutation of the channel indices.
When a task has a known characteristic delay scale, we instead use a task-specific initialization; for example, on image tasks, initializing delays around the image row width can serve as a useful heuristic for matching the delayed branch to the underlying spatial structure.

\section{Efficient FFT Training for Delay State Space Models}
\label{sec:fft_training}

Although Sec.~\ref{sec:dde_ssm} yields an efficient recurrent update for inference, sequential training remains prohibitively slow for long sequences.
Standard SSMs avoid this bottleneck through convolutional training, and in some cases through parallel scan.
In DSSMs, however, delayed feedback breaks the simple time-domain kernel formula available in ODE-based SSMs, while Appendix~\ref{sec:scan-based-parallelization} explains why parallel scan is also less natural for long-delay DSSMs.
We therefore derive a frequency-domain transfer function and combine it with a kernel contour shift to obtain an FFT training algorithm.

\paragraph{Why Standard FFT Convolution Fails for DSSMs.}
For standard SSMs, the LTI property yields a global discrete convolution, $\boldsymbol{y} = \bar{\boldsymbol{K}} \ast \boldsymbol{x}$, with a closed-form kernel $\bar{K}_t = \boldsymbol{C}\bar{\boldsymbol{A}}^t\bar{\boldsymbol{B}}$.
In DSSMs, delayed feedback introduces history accesses and recursive memory echoes. The kernel can still be generated recurrently, but this construction is sequential and less parallel-friendly, motivating the transfer-function route below.

\paragraph{Exact Z-Domain Transfer Function.}
We work directly in the Z-domain and derive the exact transfer function.

\begin{proposition}[Frequency-Domain Transfer Function]
\label{prop:transfer-function}
Consider the scalar delayed recurrence
\begin{equation}
    h_t = \bar{\lambda} h_{t-1} + \bar{\mu}\left[(1-\alpha)h_{t-k} + \alpha h_{t-k-1}\right] + \bar{b}x_t,
    \qquad
    y_t = c h_t,
\end{equation}
where $\bar{\lambda}$ and $\bar{\mu}$ are the discrete main-branch and delayed coefficients, $\bar{b}$ and $c$ are the scalar input and output coefficients, and $\tau = (k+\alpha)\Delta t$ with $k \in \mathbb{Z}^+$ and $\alpha \in [0,1)$.
Then the input-output transfer function is
\begin{equation}
    \label{eq:transfer-function}
    K(z)
    =
    c
    \left(
        1
        -
        z^{-1}\bar{\lambda}
        -
        z^{-k} D(z)\bar{\mu}
    \right)^{-1}
    \bar{b},
    \quad
    \text{where }
    D(z) = (1-\alpha) + \alpha z^{-1}.
\end{equation}
\end{proposition}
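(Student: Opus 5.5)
The plan is to apply the unilateral Z-transform to the recurrence with zero initial conditions, $h_j = 0$ and $x_j=0$ for $j<0$, matching the convention $\boldsymbol{h}_j=\boldsymbol{0}$ for $j<0$ adopted in Sec.~\ref{sec:discretization}. Define $H(z)=\sum_{t\ge 0} h_t z^{-t}$, $X(z)=\sum_{t\ge0}x_t z^{-t}$ and $Y(z)=\sum_{t\ge0}y_tz^{-t}$. The key fact is the shift rule $\sum_{t\ge0} h_{t-m}z^{-t}=z^{-m}H(z)$ for every integer $m\ge 0$. This holds because the terms with $t<m$ vanish under zero history, and reindexing $s=t-m$ does the rest.

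Next, multiply the recurrence by $z^{-t}$ and sum over $t\ge 0$. I will apply the shift rule with $m=1$, $m=k$ and $m=k+1$; since $k\in\mathbb{Z}^+$, all three shifts are nonnegative. This gives
\begin{equation*}
H(z)=\bar\lambda z^{-1}H(z)+\bar\mu\big[(1-\alpha)z^{-k}+\alpha z^{-k-1}\big]H(z)+\bar b X(z).
\end{equation*}
The bracketed factor equals $z^{-k}\big((1-\alpha)+\alpha z^{-1}\big)=z^{-k}D(z)$. Collecting terms yields $\big(1-z^{-1}\bar\lambda-z^{-k}D(z)\bar\mu\big)H(z)=\bar b X(z)$. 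Since $Y(z)=cH(z)$, dividing through gives $Y(z)=K(z)X(z)$ with $K(z)$ exactly as in Eq.~\eqref{eq:transfer-function}.

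The main obstacle is making the manipulations rigorous: the sums must converge, and the denominator must be nonzero on the region where we use $K$. I would handle both on $|z|\ge 1$ using Proposition~\ref{prop:discrete-delay-stability}, or a direct estimate of the same kind, under the standing condition $|\bar\lambda|+|\bar\mu|<1$ enforced by the parameterization.

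For the denominator, when $|z|\ge1$ we have
\begin{equation*}
\big|z^{-1}\bar\lambda+z^{-k}D(z)\bar\mu\big|\le|\bar\lambda|+|\bar\mu|\big((1-\alpha)+\alpha\big)<1.
\end{equation*}
So the denominator does not vanish, and its reciprocal expands as a convergent geometric series in $z^{-1}$.

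For the sums, I would bound the impulse response directly. Let $g_t$ solve the recurrence with $x=\delta_0$, and set $\rho=|\bar\lambda|+|\bar\mu|<1$. Induction on $t$, using the convex-combination structure of the delayed term, gives $|g_t|\le |\bar b|\,\rho^{\lceil t/(k+1)\rceil}$, so $g_t$ decays geometrically. Hence $G(z)=\sum_t g_tz^{-t}$ converges absolutely on a region $|z|>r$ for some $r<1$. The computation above, applied with $x=\delta_0$ so that $X(z)=1$, identifies $G(z)$ with $(1-z^{-1}\bar\lambda-z^{-k}D(z)\bar\mu)^{-1}\bar b$ there.

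By linearity and time invariance, $y=(cg)\ast x$, so $K(z)$ is the Z-transform of the kernel $c\,g_t$. In particular, on the unit circle $K$ gives the DTFT used for FFT training.
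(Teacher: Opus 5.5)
Your proposal is correct and follows the paper's proof: take the Z-transform with zero history, apply the shift rules for delays $1$, $k$ and $k+1$, factor out $z^{-k}D(z)$, and solve for $H(z)$. The extra rigor you add (the denominator is nonzero on $|z|\ge 1$, and $|g_t|\le|\bar b|\rho^{\lceil t/(k+1)\rceil}$) is sound and goes beyond the paper, which treats the transforms formally. Note that this extra step uses $|\bar\lambda|+|\bar\mu|<1$, which the proposition does not assume: the identity itself holds as formal power series in $z^{-1}$ without it, and the condition is needed only to put the unit circle inside the region of convergence, so that FFT sampling of $K$ is meaningful.
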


\begin{proof}
A proof is given in Appendix~\ref{sec:proof-transfer-function}.
\end{proof}

\paragraph{From Transfer Functions to FFT Training.}
A standard FFT-based route to recover a convolution kernel from a transfer function is to evaluate $K(z)$ on the length-$M$ FFT grid,
\begin{equation}
    \hat{\boldsymbol{K}} = [K(z_0), K(z_1), \cdots, K(z_{M-1})]^\top,
    \qquad
    z_m = \exp(i \frac{2\pi m}{M}),
\end{equation}
and apply IFFT, i.e., $\tilde{\boldsymbol{K}} = \mathrm{IFFT}(\hat{\boldsymbol{K}})$. 
Let $\bar{\boldsymbol{K}}$ denote the desired linear kernel associated with $K(z)$. The IFFT instead recovers the length-$M$ circular kernel, which is the time-domain alias of $\bar{\boldsymbol{K}}$:
\begin{equation}
    \tilde{K}_t = \sum_{r=0}^{\infty} \bar{K}_{t + rM}, \quad 0 \le t < M.
\end{equation}
Because delayed feedback creates long-lived memory echoes, the resulting time-domain aliasing $\sum_{r=1}^\infty \bar{K}_{t + rM}$ can be non-negligible. We therefore apply a kernel contour shift by damping the linear kernel as
\begin{equation}
    \bar{K}'_t = \rho^t \bar{K}_t,
    \qquad
    0 < \rho < 1,
\end{equation}
so that the time-domain aliasing $\sum_{r=1}^\infty \bar{K}'_{t + rM}$ becomes sufficiently small. Applying the same FFT recovery route to the shifted transfer function yields the damped circular kernel $\tilde{\boldsymbol{K}}'$, from which we recover the original linear kernel by reversing the damping, i.e., $\bar{K}_t \approx \rho^{-t}\tilde{K}'_t$. Smaller $\varepsilon_{\mathrm{shift}}$ reduces aliasing but increases restoration amplification through $\rho^{-t}$, so we choose it to balance aliasing suppression and floating-point error. Appendix~\ref{sec:kernel-contour-shift-details} provides the full derivation, including the aliasing relation, shifted coefficients, and restoration formula.

\paragraph{Algorithm Summary.}
Algorithm~\ref{alg:dssm_training} summarizes FFT training after discretization.
For a full DSSM layer, this procedure is applied independently to each channel.

\begin{algorithm}[htbp]
    \caption{Efficient FFT Training for DSSMs}
    \label{alg:dssm_training}
        \begin{algorithmic}[1]
        \REQUIRE 
            Input sequence $\boldsymbol{x} \in \mathbb{R}^{L}$;
            FFT length $M \ge 2L-1$;
            Discretized mode-wise coefficients $\bar{\boldsymbol{\lambda}}, \bar{\boldsymbol{\mu}}, \bar{\boldsymbol{b}}, \boldsymbol{c} \in \mathbb{C}^{N}$ for one channel;
            Delay decomposition $\tau/\Delta t = k + \alpha$;
            Shift tolerance $\varepsilon_{\mathrm{shift}}$.
        \ENSURE Output sequence $\boldsymbol{y} \in \mathbb{R}^{L}$.

        \STATE Set the damping radius and shifted dynamics:
        \begin{equation*}
            \rho \leftarrow \varepsilon_{\mathrm{shift}}^{1/M},
            \quad
            \bar{\boldsymbol{\lambda}}' \leftarrow \rho \bar{\boldsymbol{\lambda}},
            \quad
            \bar{\boldsymbol{\mu}}' \leftarrow \rho^{k}\bar{\boldsymbol{\mu}},
            \quad
            D'(z) \leftarrow (1-\alpha) + (\alpha \rho) z^{-1}.
        \end{equation*}
        \FOR{$m = 0$ \TO $M - 1$}
            \STATE Sample the shifted transfer function on the FFT grid $z_m = \exp(i \frac{2\pi m}{M})$:
            \begin{equation*}
                \boldsymbol{g}(z_m)
                =
                \left(
                    \boldsymbol{1}
                    -
                    z_m^{-1}\bar{\boldsymbol{\lambda}}'
                    -
                    z_m^{-k}D'(z_m)\bar{\boldsymbol{\mu}}'
                \right)^{-1},
                \quad
                K'(z_m)
                =
                \sum_{n=1}^{N} c_n g_n(z_m)\bar{b}_n.
            \end{equation*}
            Here, the inverse is taken elementwise across modes.
        \ENDFOR

        \STATE Apply IFFT to obtain the damped circular kernel $\tilde{\boldsymbol{K}}'$:
        \begin{equation*}
            \hat{\boldsymbol{K}}' \leftarrow [K'(z_0), K'(z_1), \cdots, K'(z_{M-1})]^\top,
            \quad
            \tilde{\boldsymbol{K}}' \leftarrow \mathrm{IFFT}(\hat{\boldsymbol{K}}').
        \end{equation*}
        \STATE Restore the length-$L$ linear kernel and zero-pad it to length $M$:
        \begin{equation*}
            \bar{K}_t \leftarrow \tilde{K}'_t\rho^{-t},
            \qquad
            t = 0,\cdots,L-1;
            \qquad
            \bar{K}_t \leftarrow 0,
            \quad
            t = L,\cdots,M-1.
        \end{equation*}

        \STATE Zero-pad the input to length $M$ and multiply in the frequency domain:
        \begin{equation*}
            \boldsymbol{x}^{\mathrm{pad}} \leftarrow [x_0,\cdots,x_{L-1},0,\cdots,0]^\top \in \mathbb{R}^{M},
            \quad
            \hat{\boldsymbol{Y}} = \mathrm{FFT}(\bar{\boldsymbol{K}}) \odot \mathrm{FFT}(\boldsymbol{x}^{\mathrm{pad}}).
        \end{equation*}
        \STATE Recover the length-$L$ output:
        \begin{equation*}
            y_t \leftarrow \left[\mathrm{IFFT}(\hat{\boldsymbol{Y}})\right]_t,
            \qquad
            t = 0,\cdots,L-1.
        \end{equation*}
    \end{algorithmic}
\end{algorithm}

\paragraph{Complexity Analysis.}
Per channel, the computation is dominated by element-wise evaluation of the diagonal transfer function across $N$ modes and $M$ frequencies, which costs $\mathcal{O}(MN)$, and by FFT/IFFT operations of length $M$, which cost $\mathcal{O}(M \log M)$. The per-channel complexity is therefore $\mathcal{O}(M(N + \log M))$, and a full layer with hidden size $H$ costs $\mathcal{O}(HM(N + \log M))$.
Implementation-level optimizations used in practice are described in Appendix~\ref{sec:implementation-details}.

\section{Experiments}
\label{sec:experiments}

We evaluate DS4D on targeted delayed-memory tasks, efficiency and memory cost, and standard benchmarks that test broader sequence-modeling performance. More details are provided in Appendix~\ref{sec:experimental-details}.

\subsection{Synthetic Tasks}
\label{sec:synthetic-tasks}

We use six synthetic benchmarks to test delayed retrieval and aggregation. The three target mappings---\emph{copy}, \emph{sum}, and \emph{weighted sum}---probe exact delayed recall, delayed aggregation, and content-sensitive delayed aggregation, each under both \emph{fixed-delay} and \emph{variable-delay} regimes. We compare DS4D against five representative baselines: LSTM, S4D, Mamba, a GPT-2 recipe, and a Llama recipe. 
Figure~\ref{fig:synthetic-learning-curves} shows the learning curves, while Appendix~\ref{sec:experimental-details} provides the full task definitions, data-generation details, and matched-parameter hyperparameters.

\begin{figure*}[htbp]
    \centering
    \includegraphics[width=\textwidth]{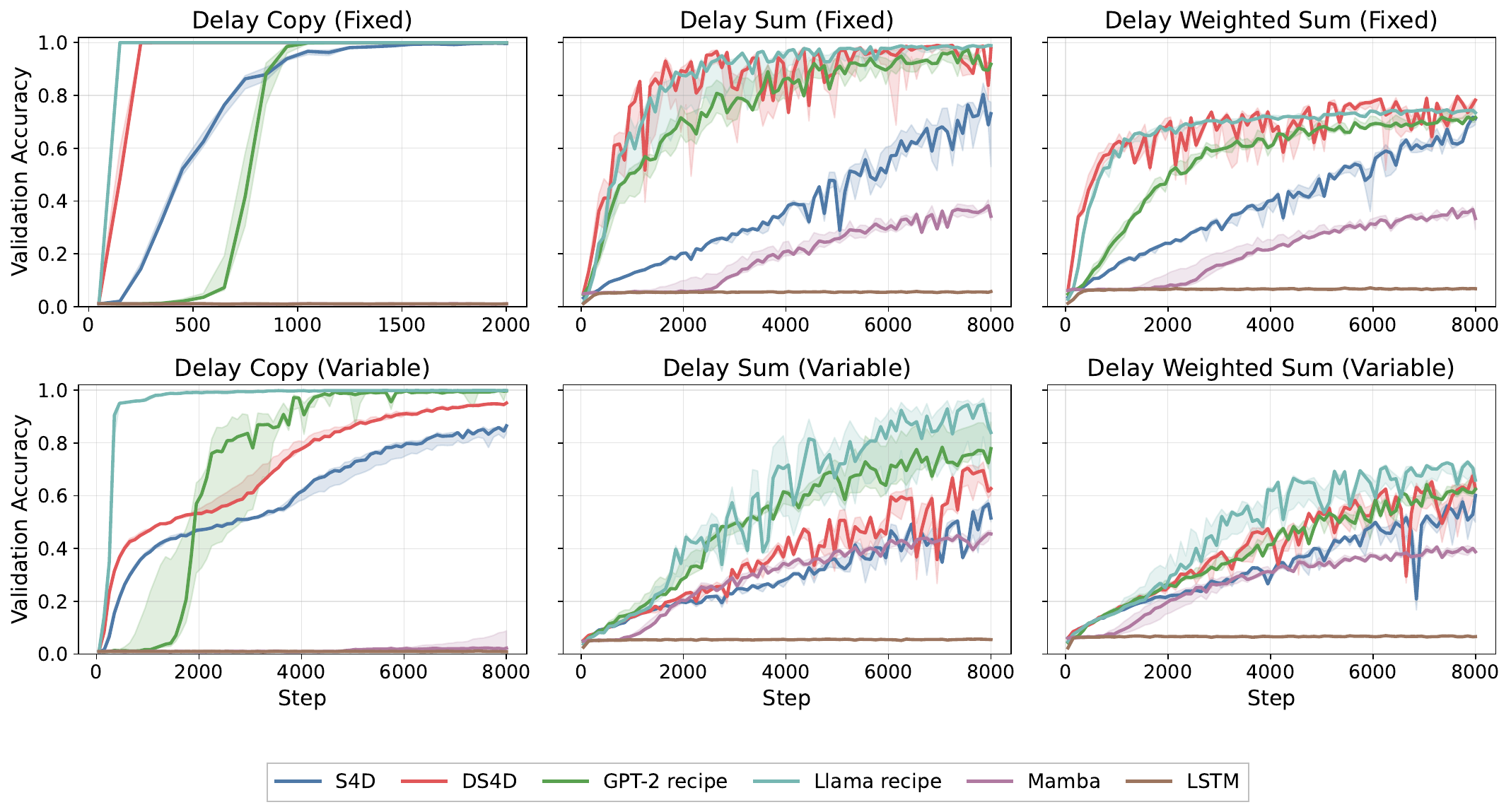}
    \caption{Validation trajectories on the six synthetic benchmarks using the best learning rate selected for each baseline. Curves show the median validation accuracy across three seeds, and shaded bands indicate interquartile ranges.}
    \label{fig:synthetic-learning-curves}
\end{figure*}

As shown in Figure~\ref{fig:synthetic-learning-curves}, DS4D converges rapidly on all three fixed-delay tasks and remains competitive with the strongest Transformer-based baselines, while under variable delays it is consistently more robust than S4D and Mamba. Table~\ref{tab:synthetic-mechanism} provides a more direct view of the learned mechanism: mean channel delays remain close to their initialization, suggesting that DS4D preserves a distributed bank of channel-wise delays rather than collapsing to a single task-specific lag, while nonzero changes in both $\tau$ and $\bar{\mu}$ indicate local retuning within that bank. The ablations support this interpretation, as collapsing the learned delay to its minimum one-step value or removing the delayed branch sharply degrades validation accuracy across all six tasks. 

\begin{table}[htbp]
\centering
\caption{Mechanistic evidence on synthetic DS4D checkpoints. 
(a) Mean channel delay, mean absolute delay shift $|\Delta \tau|$, and mean absolute delayed-branch change $|\Delta \bar{\mu}|$ from initialization to the best-validation checkpoint. 
(b) Validation accuracy under targeted ablations on all six synthetic tasks; $\tau_{\min}$ collapses the learned delay to one step, and $|\bar{\mu}|=0$ removes the delayed branch.}
\label{tab:synthetic-mechanism}
\footnotesize
\setlength{\tabcolsep}{2.8pt}
\begin{minipage}[t]{0.63\linewidth}
\centering
\textbf{(a) Parameter Learning Summary}
\vspace{2mm}
\begin{tabular}{lccc}
    \toprule
    Task & \shortstack{Mean $\tau$\\(init $\rightarrow$ best)} & \shortstack{Mean $|\Delta \tau|$} & \shortstack{Mean $|\Delta \bar{\mu}|$} \\
    \midrule
        Copy (F) & 544.0 $\rightarrow$ 543.9 & 2.6 & 0.004 \\
        Copy (V) & 544.0 $\rightarrow$ 545.7 & 5.0 & 0.018 \\
        Sum (F) & 544.0 $\rightarrow$ 543.9 & 4.4 & 0.005 \\
        Sum (V) & 288.0 $\rightarrow$ 288.1 & 2.3 & 0.008 \\
        W.Sum (F) & 544.0 $\rightarrow$ 543.8 & 3.8 & 0.005 \\
        W.Sum (V) & 288.0 $\rightarrow$ 288.0 & 2.4 & 0.008 \\
    \bottomrule
\end{tabular}
\end{minipage}
\hfill
\begin{minipage}[t]{0.33\linewidth}
\centering
\textbf{(b) Ablation Results}
\vspace{2mm}
\begin{tabular}{lccc}
    \toprule
    Task & Learned & $\tau_{\min}$ & $|\bar{\mu}|=0$ \\
    \midrule
        Copy (F) & 1.000 & 0.010 & 0.012 \\
        Copy (V) & 0.954 & 0.011 & 0.011 \\
        Sum (F) & 0.984 & 0.052 & 0.003 \\
        Sum (V) & 0.772 & 0.041 & 0.010 \\
        W.Sum (F) & 0.772 & 0.059 & 0.005 \\
        W.Sum (V) & 0.705 & 0.046 & 0.003 \\
    \bottomrule
\end{tabular}
\end{minipage}
\end{table}

\subsection{Efficiency and Accuracy}
\label{sec:efficiency-fft}

Figure~\ref{fig:efficiency} compares DS4D against S4D under matched-parameter and matched-hidden-size settings, reporting training throughput, decoding throughput, training memory, and decoding memory for sequence lengths from $1024$ to $8192$. 
FFT training remains close to S4D in both settings, consistent with the complexity derived in Sec.~\ref{sec:fft_training}.
During autoregressive inference, decoding throughput remains nearly flat as sequence length increases, matching the expected $\mathcal{O}(1)$ per-token compute. Training memory grows linearly with sequence length, while decoding memory scales as $\mathcal{O}(L)$ because DS4D retains the hidden-state history required for explicit delayed-state access. Detailed benchmark settings, including the official S4D CUDA backend, the recurrent S4D decode path, and our DS4D fast paths, are provided in Appendix~\ref{sec:efficiency-details}.

\begin{figure*}[htbp]
    \centering
    \includegraphics[width=\textwidth]{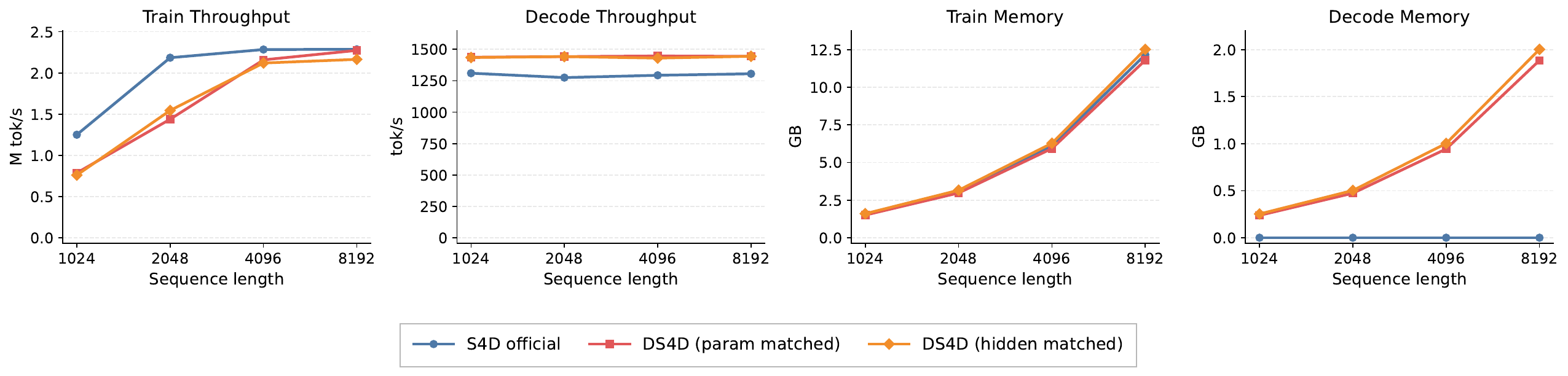}
    \caption{Efficiency of DS4D versus S4D under matched-parameter and matched-hidden-size settings. We report training throughput, decoding throughput, training memory, and decoding memory across sequence lengths. FFT training remains close to S4D, decoding throughput stays nearly length-independent, and decoding memory grows with the retained history horizon because DS4D caches past hidden states for explicit delayed-state access. S4D uses the complete official implementation, with the official CUDA Vandermonde backend for full-sequence training and the official recurrent step path for streaming decode.}
    \label{fig:efficiency}
\end{figure*}

To verify the FFT algorithm, we compare the FFT path against recurrent path on the same discretized DS4D model and inputs, and find that the numerical discrepancy is negligible. 
Detailed settings and per-length results are provided in Appendix~\ref{sec:efficiency-details}.

\subsection{General Sequence Modeling}
\label{sec:generalization}

Table~\ref{tab:general-seq} compares DS4D against S4D on standard sequence benchmarks spanning image, speech, medical time series, and long-range classification.
Although these benchmarks do not isolate delayed retrieval, they test whether explicit delayed memory preserves broad sequence-modeling ability.
DS4D outperforms S4D on most metrics and remains close on the others, with gains across multimodal and long-range tasks.
Full task details and hyperparameters are provided in Appendix~\ref{sec:experimental-details}.

\begin{table}[htbp]
    \centering
    \footnotesize
    \caption{General sequence modeling results for DS4D and S4D. Arrows indicate whether higher or lower is better; BIDMC reports RMSE. S4D results are literature-reported from \citet{gu2022parameterization} except sMNIST.}
    \label{tab:general-seq}
    \textbf{(a) Multimodal Tasks}\\[2mm]
    \resizebox{\textwidth}{!}{
    \begin{tabular}{lcccccc}
        \hline
        \textsc{Model} & \textsc{sMNIST} $\uparrow$ & \textsc{SC (AR)} $\uparrow$ & \textsc{SC (Bi.)} $\uparrow$ & \textsc{BIDMC-RR} $\downarrow$ & \textsc{BIDMC-HR} $\downarrow$ & \textsc{BIDMC-SpO2} $\downarrow$ \\
        \hline
        S4D & $99.47 \pm 0.001$ & $93.40 \pm 0.67$ & \textbf{\boldmath$96.18 \pm 0.27$} & \textbf{\boldmath$0.254 \pm 0.022$} & $0.373 \pm 0.024$ & $0.110 \pm 0.001$ \\
        DS4D & \textbf{\boldmath$99.52 \pm 0.000$} & \textbf{\boldmath$94.88 \pm 0.16$} & $96.04 \pm 0.33$ & $0.262 \pm 0.020$ & \textbf{\boldmath$0.365 \pm 0.014$} & \textbf{\boldmath$0.075 \pm 0.009$} \\
        \hline
    \end{tabular}
    }

    \vspace{2mm}
    \textbf{(b) Long Range Arena}\\[2mm]
    \resizebox{\textwidth}{!}{
    \begin{tabular}{lcccccc}
        \hline
        \textsc{Model} & \textsc{ListOps} $\uparrow$ & \textsc{Text} $\uparrow$ & \textsc{Retrieval} $\uparrow$ & \textsc{Image} $\uparrow$ & \textsc{Pathfinder} $\uparrow$ & \textsc{Path-X} $\uparrow$ \\
        \hline
        S4D & $60.18 \pm 0.35$ & $87.34 \pm 0.20$ & \textbf{\boldmath$91.09 \pm 0.01$} & $87.83 \pm 0.37$ & $93.78 \pm 0.25$ & $92.80$ \\
        DS4D & \textbf{\boldmath$61.35 \pm 0.56$} & \textbf{\boldmath$87.87 \pm 0.11$} & $91.03 \pm 0.11$ & \textbf{\boldmath$87.90 \pm 0.04$} & \textbf{\boldmath$94.05 \pm 0.36$} & \textbf{\boldmath$93.08$} \\
        \hline
    \end{tabular}
    }
\end{table}

\section{Limitations}
\label{sec:limitations}

When delayed structure is not beneficial for a task, the gains of DS4D over S4D may be limited.
Decoding memory still scales with the bounded history horizon $L_{\max}$ rather than remaining strictly $\mathcal{O}(1)$.
For fixed $L_{\max}$, however, this cost does not grow with the total sequence length, unlike Transformer KV caches.
FFT training relies on a kernel contour shift that balances residual time-domain aliasing against floating-point restoration error.
The delayed dynamics are not yet fully understood; in particular, delayed-branch initialization and delay-location design need more systematic study.
Finally, we do not yet evaluate DS4D on large-scale autoregressive language modeling, and extending DSSMs with selective or input-dependent mechanisms remains future work.

\section{Conclusion}

We introduced Delay State Space Models (DSSMs), a DDE-inspired extension of diagonal SSMs that augments discrete SSM recurrences with explicit delayed-state feedback while retaining efficient FFT training and fast linear-time inference. 
We developed stable parameterizations, derived an exact frequency-domain transfer function, and proposed a practical FFT training algorithm based on kernel contour shift. 
Empirically, DS4D substantially improves targeted delayed-memory tasks and outperforms S4D on most standard sequence metrics while remaining close on the others. 


\bibliography{references}


\appendix

\section{Method Details}

\subsection{Conjugate Symmetry and Real-Valued Outputs}
\label{sec:conjugate-symmetry}

\begin{proposition}[Conjugate Symmetry]
A sufficient and standard way to guarantee real-valued outputs for real-valued inputs is to impose conjugate-pair symmetry on the complex parameters. Specifically, under the explicit input parameterization, for any mode $n$ with complex parameters, we include a corresponding mode $m$ such that
\begin{equation}
    \lambda_m = \lambda_n^*, \quad \mu_m = \mu_n^*, \quad b_m = b_n^*, \quad \text{and} \quad c_m = c_n^*.
\end{equation}
Here $(\cdot)^*$ denotes complex conjugation.
\end{proposition}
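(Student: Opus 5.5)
We prove that, once every complex mode has its conjugate partner, the sum over modes is real. The natural vehicle is the exact transfer function of Proposition~\ref{prop:transfer-function}, or equivalently the recurrence itself. The argument works mode by mode, so we first fix the modes. Real modes, meaning those with real $\bar\lambda_n, \bar\mu_n, \bar b_n, c_n$, trivially produce real contributions for real input. The remaining modes are grouped into conjugate pairs $(n,m)$.

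\textbf{Step 1 (discrete coefficients inherit the symmetry).} We check that the symmetry on the continuous parameters passes to the discrete ones. Conjugation commutes with $\exp$ and with real-coefficient rational operations. Hence $\bar\zeta_m=\bar\zeta_n^*$, $|\bar\zeta_m|=|\bar\zeta_n|$, and $\bar\lambda_m=\bar\lambda_n^*$ once $p_m=p_n$. The delayed coefficient satisfies $\bar\mu_m=\bar\mu_n^*$ once $\theta_m=-\theta_n$. Also $\bar b_m=\bar b_n^*$, because $\bar B$ is built from $\bar A$, $A$ and $B$ by conjugation-compatible operations. The interpolation parameters $k$, $\alpha$ and the step $\Delta t$ are real and shared by all modes. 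We take these identifications as the precise meaning of ``$\mu_m=\mu_n^*$'' in the discrete parameterization.

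\textbf{Step 2 (state-level conjugacy).} For a real input sequence $x_t$, we show by strong induction on $t$ that $h^{(m)}_t=(h^{(n)}_t)^*$ for all $t$. The base case holds because $h_j=0$ for $j<0$. For the inductive step, conjugate the recurrence for mode $n$. This uses that $x_t$, $1-\alpha$ and $\alpha$ are real, and that the buffer truncation $\hat h$ is a real, mode-independent masking operation. The result is exactly the recurrence for mode $m$ with the same real input. Consequently
\[
c_n h^{(n)}_t + c_m h^{(m)}_t = c_n h^{(n)}_t + \bigl(c_n h^{(n)}_t\bigr)^* = 2\,\mathrm{Re}\bigl(c_n h^{(n)}_t\bigr)\in\mathbb{R}.
\]
Summing over all pairs and the real modes gives $y_t\in\mathbb{R}$.

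\textbf{Step 3 (consistency with FFT training).} Alternatively, we can argue in the frequency domain with Proposition~\ref{prop:transfer-function}. The per-mode transfer function $K_m(z)$ equals $\overline{K_n(\bar z)}$, since $D(z)$ has real coefficients. This Hermitian symmetry on the FFT grid implies that the IFFT kernel is real, and the contour shift by a real $\rho$ preserves this. Hence the FFT path of Algorithm~\ref{alg:dssm_training} also yields real outputs. This step is only a remark; the time-domain induction already suffices.

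The only step with real content is Step 1. There, $\bar\mu_n=p_n|\bar\zeta_n|e^{i\theta_n}$ is parameterized through a phase rather than directly. We must therefore state that pairing means $\theta_m=-\theta_n$ and $p_m=p_n$, which is what the hypothesis $\mu_m=\mu_n^*$ requires. Everything else is routine conjugation.
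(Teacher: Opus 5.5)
Your argument is correct, but it takes a different route from the paper. The paper stays in continuous time. It Laplace-transforms the scalar DDE $\dot h_n=\lambda_n h_n+\mu_n h_n(t-\tau)+b_n x$ to get $G_n(s)=c_nb_n/(s-\lambda_n-\mu_n e^{-\tau s})$. It then observes that $G_n(s^*)^*=G_m(s)$ for the conjugate partner and concludes $G(s^*)=G(s)^*$, i.e.\ a real impulse response.

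You instead work with the discrete recurrence that is actually trained and decoded. You first push the pairing through the discretization: $\zeta_m=\zeta_n^*$, $p_m=p_n$ and $\theta_m=-\theta_n$ give $\bar\lambda_m=\bar\lambda_n^*$, $\bar\mu_m=\bar\mu_n^*$ and $\bar b_m=\bar b_n^*$. You then prove $h^{(m)}_t=(h^{(n)}_t)^*$ by strong induction. Your Z-domain remark is the discrete analogue of the paper's Laplace computation.

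The paper's version is shorter and matches the hypothesis literally, since $\lambda_n,\mu_n$ in the statement are continuous-time coefficients. It does, however, rely implicitly on Laplace-transform uniqueness to pass from $G(s^*)=G(s)^*$ to a real impulse response. It also never touches the implemented model, whose delayed branch is parameterized directly in discrete time and has no continuous $\mu_n$.

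Your version is elementary and needs no transform inversion. It handles buffer truncation and interpolation, and it covers both the recurrent and FFT paths. The price is that you must reinterpret the hypothesis in the discrete parameterization. That reinterpretation is in fact forced rather than a choice. Since $|\bar\lambda_n|+|\bar\mu_n|=|\bar\zeta_n|$ and $|\bar\mu_n|=p_n|\bar\zeta_n|$, conjugate discrete coefficients already imply $|\bar\zeta_m|=|\bar\zeta_n|$ and $p_m=p_n$.

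One detail you should state explicitly: the induction is well founded only because $k\ge 1$. This holds here because the parameterization guarantees $\tau\ge\Delta t$.
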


\begin{proof}
The total system output is the sum of outputs from independent scalar modes: $y(t) = \sum_{n=1}^N y_n(t)$, where $y_n(t) = c_n h_n(t)$.
Let $H_n(s)$ and $X(s)$ denote the Laplace transforms of the state $h_n(t)$ and the input $x(t)$, respectively. 
Applying the Laplace transform to the scalar DDE (assuming zero initial conditions) yields:
\begin{equation}
    s H_n(s) = \lambda_n H_n(s) + \mu_n e^{-\tau s} H_n(s) + b_n X(s).
\end{equation}
Rearranging terms, the transfer function $T_n(s) = H_n(s)/X(s)$ for the $n$-th mode is:
\begin{equation}
    T_n(s) = \frac{b_n}{s - \lambda_n - \mu_n e^{-\tau s}}.
\end{equation}
The global transfer function of the system, mapping input $X(s)$ to output $Y(s)$, is:
\begin{equation}
    G(s) = \sum_{n=1}^N c_n T_n(s) = \sum_{n=1}^N \frac{c_n b_n}{s - \lambda_n - \mu_n e^{-\tau s}}.
\end{equation}
Consider the term associated with index $n$, denoted $G_n(s) = c_n T_n(s)$. Its conjugate evaluated at $s^*$ is:
\begin{equation}
    G_n(s^*)^* = \left( \frac{c_n b_n}{s^* - \lambda_n - \mu_n e^{-\tau s^*}} \right)^* = \frac{c_n^*\,b_n^*}{s - \lambda_n^* - \mu_n^* e^{-\tau s}}.
\end{equation}
This expression represents a dynamical system with parameters $\lambda_n^*, \mu_n^*, b_n^*, c_n^*$. 
Thus, if every complex mode is paired with its conjugate, the corresponding transfer-function terms satisfy $G_m(s)=G_n(s^*)^*$.
The summed transfer function is therefore closed under conjugation and satisfies $G(s^*) = G(s)^*$, which corresponds to a real-valued impulse response and preserves real-valued outputs for real-valued inputs.
\end{proof}

\subsection{Proof of the Discrete Stability Criterion}
\label{sec:proof-discrete-stability}

\begin{proof}[Proof of Proposition~\ref{prop:discrete-delay-stability}]
For fixed $k \in \mathbb{Z}^+$ and $\alpha \in [0,1)$, the zero-input scalar recurrence is
\begin{equation}
    h_t = \bar{\lambda} h_{t-1} + \bar{\mu}\left[(1-\alpha) h_{t-k} + \alpha h_{t-k-1}\right].
\end{equation}
Asymptotic stability is equivalent to requiring that all roots of the corresponding characteristic equation lie strictly inside the unit disk.
We seek modal solutions of the form $h_t = z^t$ with $z \in \mathbb{C}$ and $z \neq 0$. 
Substituting $h_t = z^t$ into the recurrence gives
\begin{equation}
    z^t = \bar{\lambda} z^{t-1} + \bar{\mu}\left[(1-\alpha) z^{t-k} + \alpha z^{t-k-1}\right].
\end{equation}
Dividing both sides by $z^{t-k-1}$ yields the characteristic equation
\begin{equation}
    z^{k+1} = \bar{\lambda} z^k + \bar{\mu}\left[(1-\alpha) z + \alpha\right].
\end{equation}

We now prove that under the condition $|\bar{\lambda}| + |\bar{\mu}| < 1$, this equation cannot admit a root with $|z| \ge 1$. 
Suppose, for contradiction, that there exists such a root. 
Dividing the characteristic equation by $z^{k+1}$ gives
\begin{equation}
    1 = \bar{\lambda} z^{-1} + \bar{\mu}(1-\alpha) z^{-k} + \bar{\mu}\alpha z^{-k-1}.
\end{equation}
Taking absolute values and applying the triangle inequality, we obtain
\begin{equation}
    1
    \le
    |\bar{\lambda}| |z|^{-1}
    +
    |\bar{\mu}|(1-\alpha)|z|^{-k}
    +
    |\bar{\mu}|\alpha |z|^{-k-1}.
\end{equation}
Since $|z| \ge 1$, we have $|z|^{-1} \le 1$, $|z|^{-k} \le 1$, and $|z|^{-k-1} \le 1$. Using $(1-\alpha) + \alpha = 1$, this implies
\begin{equation}
    1 \le |\bar{\lambda}| + |\bar{\mu}|(1-\alpha) + |\bar{\mu}|\alpha = |\bar{\lambda}| + |\bar{\mu}|,
\end{equation}
which contradicts the assumption $|\bar{\lambda}| + |\bar{\mu}| < 1$.
Hence, every characteristic root must satisfy $|z| < 1$. 
Therefore, the zero-input recurrence is asymptotically stable.
\end{proof}

\subsection{Input Parameterization Equivalence}
\label{sec:parameterization-equivalence}

\begin{proposition}[Input Parameterization Equivalence]
For any diagonal DSSM system $\mathcal{S} = (\boldsymbol{A}, \boldsymbol{A}_d, \boldsymbol{B}, \boldsymbol{C}, \tau)$ where the input projection elements $b_n \neq 0$ for all $n$, there exists an equivalent absorbed-input system $\mathcal{S}' = (\boldsymbol{A}, \boldsymbol{A}_d, \boldsymbol{1}, \boldsymbol{C}', \tau)$ such that their outputs are identical for any input $x(t)$, with $c'_n = c_n b_n$.
\end{proposition}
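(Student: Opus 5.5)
The plan is to use linearity of each decoupled scalar DDE in its input, together with the diagonal structure. Fix a mode $n$ and let $h_n(t)$ be the state of $\mathcal{S}$. It solves
\begin{equation*}
    \dot{h}_n(t) = \lambda_n h_n(t) + \mu_n h_n(t-\tau) + b_n x(t),
\end{equation*}
with zero initial history. We introduce the rescaled state $g_n(t) := h_n(t)/b_n$, which is well defined because $b_n \neq 0$.

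The first step is to divide the scalar DDE by $b_n$. This shows that $g_n$ satisfies
\begin{equation*}
    \dot{g}_n(t) = \lambda_n g_n(t) + \mu_n g_n(t-\tau) + x(t).
\end{equation*}
This is exactly the $n$-th mode equation of $\mathcal{S}'$, which has input coefficient $1$ and the same $\lambda_n$, $\mu_n$, $\tau$. The zero initial history is also preserved under the rescaling.

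Next we invoke uniqueness of solutions for linear DDEs with a given initial history function, using the method of steps on the intervals $[j\tau,(j+1)\tau]$. This identifies $g_n$ with the state $h'_n$ of $\mathcal{S}'$. The outputs then agree term by term:
\begin{equation*}
    y'(t) = \sum_n c'_n h'_n(t) = \sum_n c_n b_n \frac{h_n(t)}{b_n} = \sum_n c_n h_n(t) = y(t).
\end{equation*}
As a cross-check, one can instead argue at the transfer-function level. By the Laplace computation in the conjugate-symmetry proof, each mode contributes $c_n b_n/(s-\lambda_n-\mu_n e^{-\tau s})$, and this expression depends on $(b_n, c_n)$ only through the product $c_n b_n$. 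Hence $G(s) = G'(s)$, which gives identical outputs for every input.

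The same argument applies verbatim to the discrete recurrence in Eq.~\eqref{eq:discrete_recurrence}, by rescaling $\bar b_n$. There, uniqueness is immediate by induction on $t$, and the interpolation $\tilde{\boldsymbol h}_{t-\tau}$ commutes with the scaling because it is linear. The only point that needs care is the nonzero initial history. If $\mathcal{S}$ started from a nonzero history $\phi_n$, we would choose the history of $\mathcal{S}'$ to be $\phi_n/b_n$; under the paper's zero-history convention this step is trivial. I therefore expect no real obstacle beyond stating the uniqueness and linearity facts cleanly.
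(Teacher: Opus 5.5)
Your proposal is correct and follows the same route as the paper's proof. You rescale each mode as $h_n/b_n$, divide the scalar DDE by $b_n$ to get the unit-input mode equation, and read off $c'_n = c_n b_n$ from the output sum; your uniqueness argument (zero history, method of steps) and the transfer-function cross-check make explicit what the paper leaves implicit, without changing the argument.
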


\begin{proof}
Consider the scalar DDE governing the $n$-th state mode in system $\mathcal{S}$:
\begin{equation}
    \dot{h}_n(t) = \lambda_n h_n(t) + \mu_n h_n(t-\tau) + b_n x(t).
\end{equation}
Since the system is linear, we can define a scaled state variable $\tilde{h}_n(t) = h_n(t) / b_n$. Substituting $h_n(t) = b_n \tilde{h}_n(t)$ into the dynamic equation and dividing both sides by the scalar $b_n$, we obtain:
\begin{equation}
    \dot{\tilde{h}}_n(t) = \lambda_n \tilde{h}_n(t) + \mu_n \tilde{h}_n(t-\tau) + 1 \cdot x(t).
\end{equation}
The output of the original system is given by $y(t) = \sum_{n=1}^N c_n h_n(t)$. Substituting the scaled state yields:
\begin{equation}
    y(t) = \sum_{n=1}^N (c_n b_n) \tilde{h}_n(t).
\end{equation}
By constructing a new output weight vector $\boldsymbol{C}'$ such that $c'_n = c_n b_n$, and setting the new input weight to unity ($\boldsymbol{B}' = \boldsymbol{1}$), we derive the system $\mathcal{S}'$. This new system generates the exact same input-output mapping as $\mathcal{S}$. 
Therefore, the explicit and absorbed input parameterizations are equivalent up to a mode-wise rescaling.
\end{proof}

\subsection{Phase Initialization Intuition}
\label{sec:phase-initialization-intuition}

The intuition behind delayed-phase initialization is easiest to state under a locally smooth hidden-trajectory assumption: over short windows, background or context features vary more slowly than isolated task-relevant events. 
In this regime, $\theta \approx 0$ biases the delayed branch toward reinforcing compatible slowly varying components, whereas $\theta \approx \pi$ biases it toward sign-inverted cancellation and therefore greater sensitivity to mismatch or temporal change. 

To visualize these two biases, Fig.~\ref{fig:phase-init-intuition} compares simple delayed recurrences under the same stable main-branch dynamics but different delayed phases, $\theta \approx 0$ and $\theta \approx \pi$. 
Here we use a real-valued main-branch coefficient to isolate the effect of the delayed phase itself.
Both panels use the scalar discrete system $h_t = \bar{\lambda} h_{t-1} + \bar{\mu} h_{t-k} + x_t$ with real $\bar{\lambda}=0.75$, integer delay $k=16$, and identity input/output projections. 
Panel (a) uses $\bar{\mu}=\pm 0.10$ under a constant background with a sustained pulse event, while panel (b) uses $\bar{\mu}=\pm 0.20$ under the same background with a narrow Gaussian spike. 
In both panels, dashed curves denote the response of the same system to the background-only input, while solid curves denote the response to the background-plus-event input, and the marked positions on the horizontal axis indicate the event onset time $t_0$ and its delayed reappearance at $t_0 + \tau$.

\begin{figure*}[htbp]
    \centering
    \includegraphics[width=0.98\textwidth]{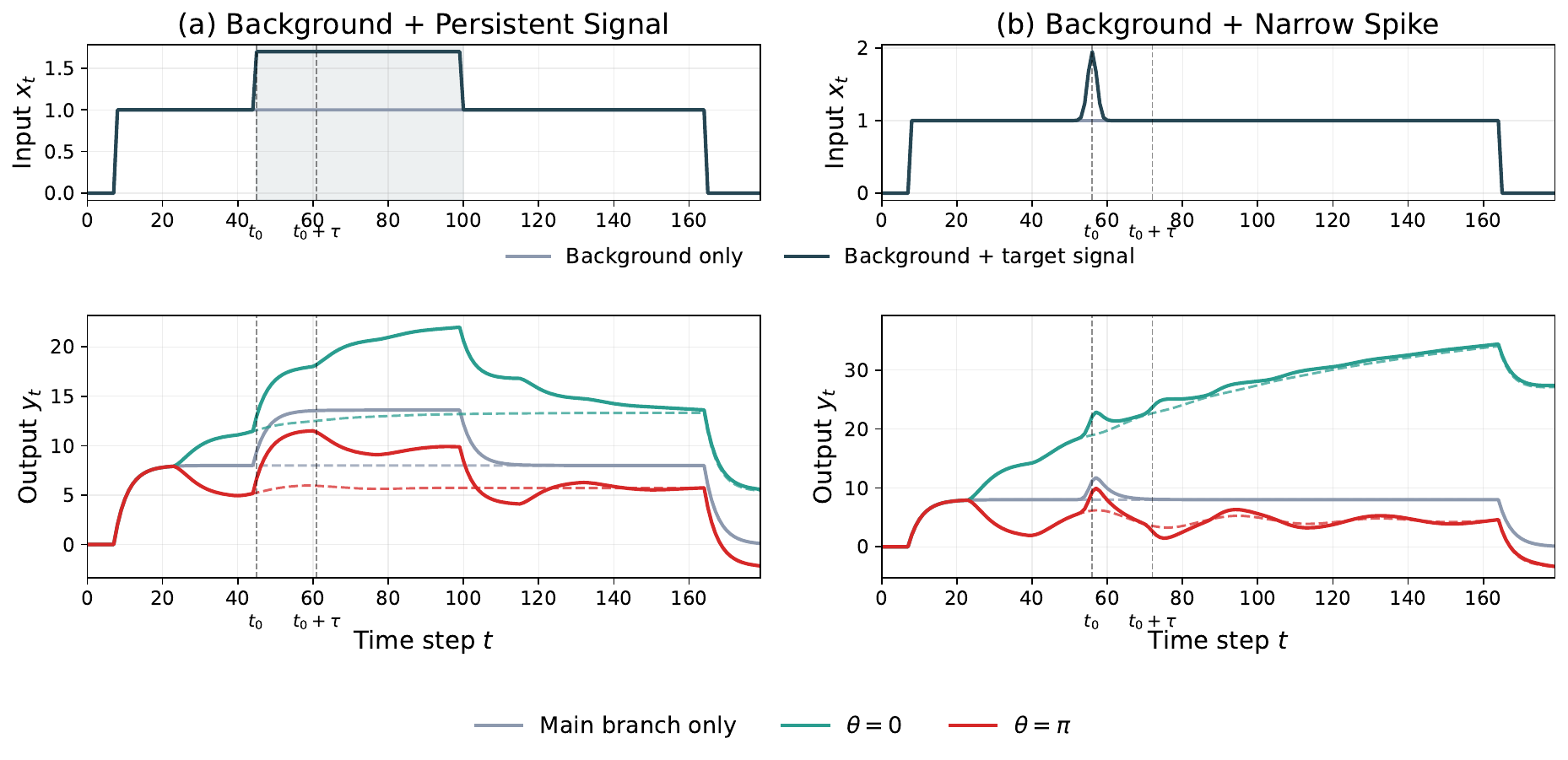}
    \caption{Qualitative effect of delayed-phase initialization under two simple backgrounds. (a) Background + persistent signal: initialization near $\theta \approx 0$ causes the delayed branch to reinforce temporally aligned signal components, so the persistent event is progressively amplified after $t_0+\tau$, whereas initialization near $\theta \approx \pi$ suppresses signal components that remain similar across the delay horizon and therefore emphasizes temporal change. (b) Background + narrow spike: at $t_0+\tau$, the delayed branch re-injects the localized event with a phase determined by $\theta$, producing a constructive delayed echo near $\theta \approx 0$ and a sign-inverted delayed echo near $\theta \approx \pi$. Together, the two panels illustrate the broader bias that $\theta \approx 0$ favors delayed aggregation of compatible signal components, while $\theta \approx \pi$ favors sensitivity to mismatch and temporal change.}
    \label{fig:phase-init-intuition}
\end{figure*}

\subsection{Relation to Scan-Based Parallelization}
\label{sec:scan-based-parallelization}

Parallel scan is most natural for first-order recurrences of the form
\begin{equation}
    h_t = \bar{\lambda} h_{t-1} + \bar{b} x_t,
\end{equation}
where the update acts on a constant-size state and the associated combine operator remains associative. This is the setting targeted by scan-based SSM implementations such as S5-style discretizations.

For simplicity, we suppress the interpolation term and consider the integer-delay recurrence
\begin{equation}
    h_t = \bar{\lambda} h_{t-1} + \bar{\mu} h_{t-k} + \bar{b} x_t
\end{equation}
which is not first-order Markovian in the original state. Recasting it into a scan-friendly first-order form requires augmenting the state with the past $k$ delay slots, e.g.,
\begin{equation}
    \boldsymbol{H}_t = [h_t, h_{t-1}, \cdots, h_{t-k+1}]^\top.
\end{equation}
The resulting update again has first-order form, but it now acts on a state whose dimension scales with the delay horizon rather than remaining constant.

For small fixed delays this reformulation is still possible, but for long delays it removes the lightweight constant-state advantage that makes scan-based SSMs attractive. 
In particular, the scan combine step no longer acts on scalar or constant-size diagonal updates, but on $k$-dimensional transition objects associated with the augmented state. 
A naive implementation would therefore compose $k \times k$ transition matrices, while even structure-exploiting implementations must still carry state and intermediate operators whose size grows with $k$, leading to substantially higher compute and memory cost than in standard first-order SSMs.
In contrast, discretized DSSMs remain linear time-invariant, so the exact transfer-function route of Sec.~\ref{sec:fft_training} continues to exist and provides a more natural exact path to parallel training.

\subsection{Proof of Transfer Function}
\label{sec:proof-transfer-function}

\begin{proof}[Proof of Proposition \ref{prop:transfer-function}]
Let $h_t \in \mathbb{C}$ and $x_t \in \mathbb{R}$ denote the scalar state and input at discrete step $t$.
With $\tau = (k+\alpha)\Delta t$, the scalar recurrence is
\begin{equation}
    h_t = \bar{\lambda} h_{t-1} + \bar{\mu}\left[(1-\alpha)h_{t-k} + \alpha h_{t-k-1}\right] + \bar{b}x_t.
\end{equation}
Applying the Z-transform and using the standard integer-shift rules
\[
\mathcal{Z}\{h_{t-1}\}=z^{-1}H(z),\qquad
\mathcal{Z}\{h_{t-k}\}=z^{-k}H(z),\qquad
\mathcal{Z}\{h_{t-k-1}\}=z^{-k-1}H(z),
\]
gives
\begin{equation}
    H(z)
    =
    \bar{\lambda}z^{-1}H(z)
    +
    \bar{\mu}z^{-k}\left((1-\alpha)+\alpha z^{-1}\right)H(z)
    +
    \bar{b}X(z).
\end{equation}
Solving for $H(z)$ yields
\begin{equation}
    H(z)
    =
    \left(
        1
        -
        \bar{\lambda}z^{-1}
        -
        \bar{\mu}z^{-k}\left((1-\alpha)+\alpha z^{-1}\right)
    \right)^{-1}
    \bar{b}X(z).
\end{equation}
Since $y_t = c h_t$, the transfer function for this scalar channel is
\begin{equation}
    K(z)
    =
    c
    \left(
        1
        -
        \bar{\lambda}z^{-1}
        -
        \bar{\mu}z^{-k}\left((1-\alpha)+\alpha z^{-1}\right)
    \right)^{-1}
        \bar{b}.
\end{equation}
\end{proof}

\subsection{From Transfer Functions to FFT Training}
\label{sec:kernel-contour-shift-details}

\paragraph{From Transfer Functions to Convolution Kernels.}
A standard FFT-based route from a transfer function to a convolution kernel is to evaluate $K(z)$ on the FFT grid $z_m = \exp(i \frac{2\pi m}{M})$ for $m=0,\cdots,M-1$, forming the sampled frequency response
\begin{equation}
  \hat{\boldsymbol{K}}
  =
  \left[
      K(z_0), K(z_1), \cdots, K(z_{M-1})
  \right]^\top.
\end{equation}
Applying IFFT then yields the corresponding length-$M$ \emph{circular convolution kernel}
\begin{equation}
  \tilde{\boldsymbol{K}}
  =
  \left[
      \tilde{K}_0, \tilde{K}_1, \cdots, \tilde{K}_{M-1}
  \right]^\top
  =
  \mathrm{IFFT}(\hat{\boldsymbol{K}}).
\end{equation}
However, this standard procedure does not directly recover the desired linear kernel $\bar{\boldsymbol{K}}$. Instead, the circular kernel is a time-domain alias of the linear kernel:
\begin{equation}
    \tilde{K}_t = \sum_{r=0}^{\infty} \bar{K}_{t + rM}, \quad 0 \le t < M.
\end{equation}
Due to the memory echoes characteristic of delay systems, the time-domain aliasing tail $\sum_{r \ge 1} \bar{K}_{t+rM}$ may remain non-negligible.

\paragraph{Kernel Contour Shift.}
To suppress time-domain aliasing, we damp the linear kernel as $\bar{K}'_t = \rho^t \bar{K}_t$ with $0 < \rho < 1$.
Let $\rho = \exp(-\sigma \Delta t)$ be the damping factor. For an FFT window of length $M$, we choose $\sigma$ so that
\begin{equation}
    \rho^M \approx \varepsilon_{\mathrm{shift}},
\end{equation}
where $\varepsilon_{\mathrm{shift}} \ll 1$ is a small decay target.
The corresponding shifted discrete dynamics are
\begin{equation}
    \bar{\lambda}' = \rho \bar{\lambda},
    \qquad
    \bar{\mu}' = \rho^{k}\bar{\mu},
    \qquad
    D'(z) = (1-\alpha) + (\alpha \rho) z^{-1}.
\end{equation}
We then evaluate the shifted transfer function
\begin{equation}
    K'(z)
    =
    c
    \left(
        1
        -
        z^{-1}\bar{\lambda}'
        -
        z^{-k}D'(z)\bar{\mu}'
    \right)^{-1}
    \bar{b}.
\end{equation}
Applying IFFT yields the corresponding damped circular kernel $\tilde{\boldsymbol{K}}'$:
\begin{equation}
    \tilde{K}'_t 
    = \sum_{r=0}^{\infty} \bar{K}'_{t+rM}
    = \sum_{r=0}^{\infty} \rho^{t+rM} \bar{K}_{t+rM},
    \qquad
    0 \le t < M.
\end{equation}
Here $\bar{K}'_t$ denotes the damped linear time-domain kernel associated with $K'(z)$, while $\bar{K}_t$ denotes the original undamped linear kernel associated with $K(z)$.
After the IFFT, we recover the linear kernel by reversing the damping in time:
\begin{equation}
    \rho^{-t}\tilde{K}'_t
    =
    \bar{K}_t + \sum_{r=1}^{\infty} \rho^{rM} \bar{K}_{t+rM}
    \quad
    \Rightarrow
    \quad
    \bar{K}_t \approx \rho^{-t}\tilde{K}'_t.
\end{equation}
Since $\rho^M \approx \varepsilon_{\mathrm{shift}}$, each aliased tail term is suppressed by an additional factor of approximately $\varepsilon_{\mathrm{shift}}$. However, reversing the damping multiplies numerical errors by $\rho^{-t}$, so very small $\varepsilon_{\mathrm{shift}}$ can amplify floating-point error. In practice, $\varepsilon_{\mathrm{shift}}$ is chosen to balance aliasing suppression and restoration error.

\section{Implementation Details}
\label{sec:implementation-details}

We use two CUDA fast paths in our implementation, one for FFT training and one for autoregressive decoding.

\paragraph{FFT Transfer Construction.}
For FFT training, we use a custom CUDA kernel to build the contour-shifted transfer function on the frequency grid from the shifted form of Eq.~\eqref{eq:transfer-function}.
Instead of materializing the shifted coefficients and delayed terms as separate intermediate tensors in PyTorch, the CUDA kernel directly fuses the following steps on the FFT grid:
\begin{itemize}
    \item construct the contour-shifted main-branch and delayed coefficients;
    \item combine the delayed interpolation factors with the frequency terms $z^{-k}$ and $z^{-k-1}$;
    \item form the denominator of the rational transfer function for each mode and frequency;
    \item accumulate the mode-wise contributions into the final sampled transfer function.
\end{itemize}
This fused construction is the main implementation optimization behind the FFT-training results in Sec.~\ref{sec:efficiency-fft}.

\paragraph{Fused Decode Kernel.}
Given the previous state and the history buffer, a single kernel launch performs the following operations:
\begin{itemize}
    \item locate the two delayed anchors corresponding to $t-k$ and $t-k-1$ in the history buffer;
    \item gather the two historical states and apply the validity mask induced by the current cache position;
    \item linearly interpolate them to form the delayed state used by the recurrence;
    \item update the next state using the main branch, delayed branch, and input term;
    \item write the updated state back into the current history-buffer slot;
    \item compute the final readout.
\end{itemize}
This removes a long sequence of small gather, masking, interpolation, recurrence, and writeback kernels from the decode hot path, which is the main reason the optimized DS4D decode remains competitive with S4D.

\paragraph{Caching and Precomputation.}
In addition to the CUDA kernels, we cache recurrent constants and frequency-domain geometry whenever possible. 
Specifically, inference reuses discretized coefficients and interpolation weights, while FFT training reuses frequency-grid quantities associated with the current sequence length, dtype, and device. These caches further reduce setup overhead without changing the underlying model or algorithm.

\section{Experiment Configurations}
\label{sec:experimental-details}

\subsection{Baselines}

We compare DS4D against a comprehensive suite of baselines ranging from classic recurrent networks to state-of-the-art Transformers and recent SSM variants. 
All models are either implemented in PyTorch or imported from the transformers library. 
Unless otherwise stated, we control the model size by fixing the number of layers and changing the hidden dimension $H$ to ensure a fair comparison across different architectures.

\begin{itemize}
    \item \textbf{LSTM}~\citep{hochreiter1997long}: The classic long short-term memory network. It utilizes gating mechanisms to mitigate vanishing gradients but relies on sequential processing, limiting training parallelization.
    \item \textbf{S4D}~\citep{gu2022parameterization}: The diagonal structured SSM. It relies on linear ODEs ($\dot{h} = Ah + Bx$) and HiPPO initialization to model long-range dependencies.
    \item \textbf{Mamba}~\citep{gu2024mamba}: A selective state space model where system parameters are input-dependent. Mamba represents the current state-of-the-art in linear-time sequence modeling.
    \item \textbf{GPT-2 recipe}~\citep{radford2019language}: A standard causal Transformer recipe with absolute positional embeddings and pre-layernorm.
    \item \textbf{Llama recipe}~\citep{touvron2023llama}: A modern Transformer recipe incorporating rotary positional embeddings (RoPE)~\citep{su2024roformer}, RMSNorm, and SiLU activation.
\end{itemize}

\subsection{Synthetic Tasks}

All synthetic tasks use the same prompt-completion format
\begin{equation}
    \boldsymbol{x} = [\boldsymbol{x}_{\text{data}}, \boldsymbol{x}_{\text{lag}}, x_{\text{go}}, \boldsymbol{x}_{\text{target}}],
\end{equation}
where $\boldsymbol{x}_{\text{data}} \in \mathbb{Z}^{L_{\text{data}}}$ is a random source sequence, $\boldsymbol{x}_{\text{lag}}$ is a zero-padding block of length $L_{\text{lag}}$, $x_{\text{go}}$ is a trigger token, and $\boldsymbol{x}_{\text{target}}$ is the target sequence. We refer to $L_{\text{lag}}$ as the padding lag; the absolute source-target offset also depends on the source position and causal next-token alignment. Training uses a standard causal next-token objective, but the loss is computed only on the $\boldsymbol{x}_{\text{target}}$ segment so that the model is evaluated purely on delayed retrieval or aggregation.

The three target mappings differ only in how $\boldsymbol{x}_{\text{target}}$ is constructed from $\boldsymbol{x}_{\text{data}}$:
\begin{itemize}
    \item \textbf{Delay Copy}: $\boldsymbol{x}_{\text{target}} = \boldsymbol{x}_{\text{data}}$.
    \item \textbf{Delay Sum}: each target token is a sliding-window sum,
    \begin{equation}
        \boldsymbol{x}_{\text{target}}[t] = \sum_{\ell=0}^{W-1} \boldsymbol{x}_{\text{data}}[t-\ell],
    \end{equation}
    with zero-padding outside the valid range.
    \item \textbf{Delay Weighted Sum}: each target token is a weighted sliding-window sum,
    \begin{equation}
        \boldsymbol{x}_{\text{target}}[t] = \sum_{\ell=0}^{W-1} \boldsymbol{G}[\ell]\boldsymbol{x}_{\text{data}}[t-\ell],
    \end{equation}
    where $\boldsymbol{G}[\ell]$ is a discrete Gaussian kernel
    \begin{equation}
        \boldsymbol{G}[\ell] = A \cdot \exp\left(-\frac{(\ell-\mu)^2}{2\sigma^2}\right).
    \end{equation}
\end{itemize}

All synthetic tasks are formulated as teacher-forced token prediction problems. The data tokens are sampled from $\{1,\cdots,m\}$, where $m$ is chosen for each task so that all sum or weighted-sum targets fit within the vocabulary after reserving token $0$ for padding and the final token for the go symbol. Delay Sum targets are integer-valued by construction, while Delay Weighted Sum targets are rounded to the nearest integer token after applying the Gaussian weights; configurations that would overflow the vocabulary are rejected. Validation accuracy is teacher-forced token accuracy on the supervised target segment.

We evaluate these mappings under two delay protocols. 
In the \emph{fixed-delay} protocol, all sequences in a task share the same padding lag. 
In the \emph{variable-delay} protocol, each sample draws its padding lag uniformly from a task-specific interval. 
The latter directly tests whether a model can remain effective when the relevant retrieval location varies across examples rather than being captured by a single shared LTI kernel.
Consequently, solving these tasks requires the model to look back to the correct long-range position and recover the needed information, making them a strong test of sequence memory.

All synthetic models are trained from scratch on each task and optimized with AdamW using a fixed learning-rate schedule without weight decay. 
For fairness, we keep the number of layers fixed and match total parameter counts across baselines by adjusting the hidden size $H$. 
We use the default hidden-size/intermediate-size relation for each architecture family whenever possible: 
for LSTM and S4-style DSSM blocks, the intermediate size equals the hidden size; 
for Mamba, the intermediate size is $2H$;
for the GPT-2 recipe, the feed-forward size is $4H$;
for the Llama recipe, we use a SwiGLU-style feed-forward size with a slight task-specific adjustment when needed to keep the parameter budget aligned while preserving an even per-head dimension for RoPE. 
Within the S4D family, we align the state size task-wise: Delay Copy uses $N=64$, while Delay Sum and Delay Weighted Sum use $N=128$.
For both S4D and DS4D, we initialize the SSM backbone with the S4D-Lin scheme~\citep{gu2022parameterization}; DS4D then applies the delayed-branch and delay-location initialization described in Sec.~\ref{sec:architecture}.
For the synthetic DS4D runs, we use the default delay initialization: delays are initialized on a uniform grid and randomly assigned to channels, the initial branch fraction is set to $p_n=0.1$, and the phase is initialized near $\pi$ with a small random perturbation.
For each baseline, we sweep the base learning rate over $\{2\times 10^{-4}, 10^{-3}\}$ and report the best result.
For DS4D, we keep the $\Delta t$ learning rate fixed at $1\times 10^{-4}$ across all synthetic runs, since we found the model noticeably more sensitive to $\Delta t$ updates than to the base learning rate.
Since the GPT-2 recipe uses learned absolute positional embeddings, its parameter count scales with the maximum context length; we therefore use task-specific hidden sizes for its fixed-delay and variable-delay settings to keep the total parameter budget aligned. 
The other baselines reuse the same hidden size across the two protocols.

\begin{table}[htbp]
    \centering
    \caption{Fixed-delay synthetic task setups and training budgets.}
    \label{tab:synthetic-fixed-hparams}
    \begin{tabular}{lccc}
        \toprule
        \textsc{Parameter} & \textsc{Delay Copy} & \textsc{Delay Sum} & \textsc{Delay Weighted Sum}\\
        \midrule
        Train Size & $128{,}000$ & $512{,}000$ & $512{,}000$ \\
        $\boldsymbol{x}_{\text{data}}$ length & $32$ & $32$ & $32$ \\
        Padding lag $L_{\text{lag}}$ & $1024$ & $1024$ & $1024$ \\
        Window $W$ & -- & $16$ & $16$ \\
        Gaussian Weight $A$ & -- & -- & $0.75$ \\
        Gaussian Center $\mu$ & -- & -- & $8.0$ \\
        Gaussian Width $\sigma$ & -- & -- & $12.0$ \\
        Batch Size & $64$ & $64$ & $64$ \\
        Max Steps & $2000$ & $8000$ & $8000$ \\
        \bottomrule
    \end{tabular}
\end{table}

\begin{table}[htbp]
    \centering
    \caption{Variable-delay synthetic task setups and training budgets. Padding lags are sampled uniformly for each sequence.}
    \label{tab:synthetic-variable-hparams}
    \begin{tabular}{lccc}
        \toprule
        \textsc{Parameter} & \textsc{Variable Copy} & \textsc{Variable Sum} & \textsc{Variable Weighted Sum}\\
        \midrule
        Train size & $512{,}000$ & $512{,}000$ & $512{,}000$ \\
        $\boldsymbol{x}_{\text{data}}$ length & $32$ & $32$ & $32$ \\
        Padding lag range & $[1014, 1024]$ & $[507, 512]$ & $[507, 512]$ \\
        Window $W$ & -- & $16$ & $16$ \\
        Gaussian weight $A$ & -- & -- & $0.75$ \\
        Gaussian center $\mu$ & -- & -- & $8.0$ \\
        Gaussian width $\sigma$ & -- & -- & $12.0$ \\
        Batch size & $64$ & $64$ & $64$ \\
        Max steps & $8000$ & $8000$ & $8000$ \\
        \bottomrule
    \end{tabular}
\end{table}

Table~\ref{tab:synthetic-backbones} summarizes the core backbone settings used across the three synthetic benchmarks. We only report the most salient architectural knobs here and omit task-independent optimizer details.

\begin{table}[htbp]
    \centering
    \caption{
        Core baseline settings and total parameter counts across the three synthetic benchmarks. 
        All Delay Copy backbones use $2$ layers and full context length $1088$; all Delay Sum and Delay Weighted Sum backbones use $3$ layers, with context/history length $1088$ in the fixed-delay setting and $576$ in the variable-delay setting. 
        Hidden sizes are shared across fixed and variable settings except the GPT-2 recipe, whose learned absolute positional embeddings make its parameter count depend on the context length. 
        Family-specific hidden-size/intermediate-size relations follow their default settings whenever possible: LSTM and S4-style DSSM blocks use intermediate size equal to the hidden size, Mamba uses intermediate size $2H$ with convolution kernel $3$, the GPT-2 recipe uses feed-forward size $4H$, and the Llama recipe uses a SwiGLU-style feed-forward size with a small task-specific adjustment to preserve an even per-head dimension for RoPE while keeping parameter counts aligned. 
        Within the S4D family, Delay Copy uses state size $64$, while Delay Sum and Delay Weighted Sum use state size $128$. 
        DS4D uses a smaller matched-parameter hidden size than S4D because of its additional delay parameters, while Mamba keeps its default architecture and tunes state size only to best match the total parameter budget.}
    \label{tab:synthetic-backbones}
    \begin{tabular}{l c c c c r}
        \toprule
        \textsc{Model} & \textsc{\#Layers} & \textsc{Hidden Size} & \textsc{\#Heads} & \textsc{State Size} & \textsc{\#Params} \\
        \midrule
        \multicolumn{6}{c}{\textsc{Delay Copy}} \\
        LSTM & 2 & 96 & -- & -- & 168.4k \\
        S4D & 2 & 143 & -- & 64 & 167.3k \\
        Mamba & 2 & 88 & -- & 48 & 167.8k \\
        GPT-2 recipe & 2 & 60 & 4 & -- & 165.4k \\
        Llama recipe & 2 & 80 & 8 & -- & 169.8k \\
        DS4D & 2 & 75 & -- & 64 & 167.4k \\
        \midrule
        \multicolumn{6}{c}{\textsc{Delay Sum}} \\
        LSTM & 3 & 177 & -- & -- & 862.7k \\
        S4D & 3 & 260 & -- & 128 & 866.3k \\
        Mamba & 3 & 168 & -- & 76 & 867.6k \\
        GPT-2 recipe (fixed) & 3 & 132 & 2 & -- & 855.5k \\
        GPT-2 recipe (variable) & 3 & 140 & 2 & -- & 876.0k \\
        Llama recipe & 3 & 150 & 3 & -- & 868.7k \\
        DS4D & 3 & 138 & -- & 128 & 870.5k \\
        \midrule
        \multicolumn{6}{c}{\textsc{Delay Weighted Sum}} \\
        LSTM & 3 & 177 & -- & -- & 862.7k \\
        S4D & 3 & 260 & -- & 128 & 866.3k \\
        Mamba & 3 & 168 & -- & 76 & 867.6k \\
        GPT-2 recipe (fixed) & 3 & 132 & 2 & -- & 855.5k \\
        GPT-2 recipe (variable) & 3 & 140 & 2 & -- & 876.0k \\
        Llama recipe & 3 & 150 & 3 & -- & 868.7k \\
        DS4D & 3 & 138 & -- & 128 & 870.5k \\
        \bottomrule
    \end{tabular}
\end{table}

\begin{table}[htbp]
    \centering
    \small
    \setlength{\tabcolsep}{4.5pt}
    \caption{Best validation accuracy on synthetic tasks. Values are reported in $[0,1]$ as mean $\pm$ std over three seeds, where each baseline selects its best base learning rate. Best results are bolded, and second-best results are underlined.}
    \label{tab:synthetic-acc}
    \resizebox{\textwidth}{!}{
    \begin{tabular}{lcccccc}
        \toprule
        & \multicolumn{2}{c}{\textsc{Delay Copy}} & \multicolumn{2}{c}{\textsc{Delay Sum}} & \multicolumn{2}{c}{\textsc{Delay Weighted Sum}} \\
        \cmidrule(lr){2-3} \cmidrule(lr){4-5} \cmidrule(lr){6-7}
        \textsc{Model} & \textsc{Fixed} & \textsc{Variable} & \textsc{Fixed} & \textsc{Variable} & \textsc{Fixed} & \textsc{Variable} \\
        \midrule
        LSTM & \underline{$0.01 \pm 0.00$} & $0.01 \pm 0.00$ & $0.06 \pm 0.00$ & $0.06 \pm 0.00$ & $0.07 \pm 0.00$ & $0.07 \pm 0.00$ \\
        S4D & \textbf{\boldmath$1.00 \pm 0.00$} & $0.85 \pm 0.03$ & $0.82 \pm 0.06$ & $0.59 \pm 0.03$ & $0.71 \pm 0.03$ & $0.60 \pm 0.02$ \\
        Mamba & \underline{$0.01 \pm 0.00$} & $0.06 \pm 0.08$ & $0.39 \pm 0.04$ & $0.48 \pm 0.00$ & $0.38 \pm 0.01$ & $0.41 \pm 0.02$ \\
        DS4D & \textbf{\boldmath$1.00 \pm 0.00$} & \underline{$0.95 \pm 0.00$} & \textbf{\boldmath$0.99 \pm 0.00$} & $0.76 \pm 0.04$ & \textbf{\boldmath$0.80 \pm 0.01$} & \underline{$0.68 \pm 0.03$} \\
        \midrule
        GPT-2 recipe & \textbf{\boldmath$1.00 \pm 0.00$} & \textbf{\boldmath$1.00 \pm 0.00$} & \underline{$0.96 \pm 0.04$} & \underline{$0.86 \pm 0.11$} & $0.74 \pm 0.03$ & $0.67 \pm 0.04$ \\
        Llama recipe & \textbf{\boldmath$1.00 \pm 0.00$} & \textbf{\boldmath$1.00 \pm 0.00$} & \textbf{\boldmath$0.99 \pm 0.00$} & \textbf{\boldmath$0.89 \pm 0.15$} & \underline{$0.75 \pm 0.01$} & \textbf{\boldmath$0.72 \pm 0.03$} \\
        \bottomrule
    \end{tabular}
    }
\end{table}

\subsection{Efficiency and FFT Accuracy}
\label{sec:efficiency-details}

All efficiency experiments use a $4$-layer backbone with state size $64$, explicit $\boldsymbol{B}$, pre-norm, GLU final activation, float32 precision, and CUDA execution on a single NVIDIA GeForce RTX 5090 GPU. 
We compare the S4D baseline against two DS4D variants: a hidden-matched model with hidden size $256$, and a matched-parameter model with hidden size $241$. 
S4D uses the official CUDA kernel for full-sequence training; streaming decode follows the official recurrent step path, which has no dedicated CUDA decode kernel. 
DS4D uses our custom CUDA fast paths. 
All benchmarks use sequence lengths $\{1024, 2048, 4096, 8192\}$.

Training throughput uses batch size $32$, one burn-in run, $10$ warmup steps, and $50$ timed steps, reported in tokens per second; training memory reuses the same setup but records peak allocated CUDA memory over $20$ timed steps. 
Autoregressive decoding throughput uses batch size $1$, one burn-in run, $32$ warmup steps, and $256$ timed one-token updates after cache prefilling, with retained history fixed at $8192$ so that the trend reflects per-token compute rather than cache growth. 
For decoding memory, the retained history size is instead set equal to the current sequence length, and we report peak allocated memory together with resident cache size.

FFT accuracy compares the same discretized DS4D model evaluated in FFT mode and recurrent mode on identical inputs over $5$ random seeds and the same sequence lengths. 
For these FFT runs, we use $M=\mathrm{nextpow2}(2L)$ and set the contour-shift target to $\varepsilon_{\mathrm{shift}}=10^{-3}$, equivalently $\rho^M=10^{-3}$.
For each seed, we instantiate paired models from the same config, copy the state dict from the FFT-mode model to the recurrent-mode model, and evaluate both on the same randomly generated input sequence. 
Table~\ref{tab:fft-scan-accuracy} reports the per-length summary on the output hidden states: the relative RMS error, normalized by the RMS magnitude of the recurrent-mode output, stays on the order of $1.0 \times 10^{-5}$, and the maximum absolute error stays below $1.2 \times 10^{-4}$.

\begin{table}[htbp]
    \centering
    \footnotesize
\caption{FFT-versus-recurrent accuracy for the same discretized DS4D model. Metrics are computed on the output hidden states; relative RMSE is normalized by the RMS of the recurrent output. Entries use the shorthand $(a \pm b)e\text{-}c = (a \pm b)\times 10^{-c}$ and report mean $\pm$ std over $5$ seeds.}
    \label{tab:fft-scan-accuracy}
    \resizebox{\linewidth}{!}{
    \begin{tabular}{lcccc}
        \hline
        Metric& 1024 & 2048 & 4096 & 8192 \\
        \hline
        Relative RMSE & $(1.37 \pm 0.04)e\text{-}5$ & $(1.12 \pm 0.01)e\text{-}5$ & $(1.07 \pm 0.00)e\text{-}5$ & $(1.48 \pm 0.01)e\text{-}5$ \\
        Max abs.\ error & $(9.43 \pm 0.53)e\text{-}5$ & $(7.49 \pm 0.56)e\text{-}5$ & $(7.19 \pm 0.37)e\text{-}5$ & $(1.12 \pm 0.03)e\text{-}4$ \\
        \hline
    \end{tabular}
    }
\end{table}

\subsection{General Sequence Modeling}

Our general-sequence experiments cover simple multimodal sequence tasks together with Long Range Arena (LRA). Following the evaluation style of S4D~\citep{gu2022parameterization}, we use these tasks to probe whether DS4D retains broad sequence-modeling ability beyond the synthetic delayed-retrieval setting.

\paragraph{Baseline Results.}
Except for sMNIST, all baseline results in the general-sequence benchmarks are taken directly from the S4D paper~\citep{gu2022parameterization}. 
All DS4D results in this section are computed over $2$ or $3$ random seeds.

\paragraph{sMNIST.}
Sequential MNIST (sMNIST) treats each image as a length-$784$ sequence.
We use a unidirectional Delay S4D classifier with mean pooling and fix the delay initialization to one image row, i.e., $\tau_0 = 28$. 
Because sMNIST does not present a strong delayed-retrieval problem, we initialize the delayed branch with a small gain $p_n = 0.015$ and phase center $\omega = 0$.

\paragraph{SC.}
Speech Commands (SC) is a raw-audio keyword-spotting benchmark in which each input is a $16000$-sample waveform and the task is $35$-way spoken-word classification.
Following the S4D setup, we report both \textsc{SC (AR)} and \textsc{SC (Bi.)}: the former uses a unidirectional model with last pooling to mimic causal/autoregressive readout, while the latter uses a bidirectional model with mean pooling over the full sequence.

\paragraph{BIDMC.}
BIDMC is a medical time-series regression benchmark based on paired electrocardiogram (EKG) and photoplethysmogram (PPG) signals of length $4000$.
The three official targets are respiratory rate (RR), heart rate (HR), and blood oxygen saturation (SpO2).

\paragraph{LRA.}
Long Range Arena (LRA)~\citep{tay2020long} evaluates efficient sequence models on a diverse suite of long-context classification tasks. Following the task summaries used in the S4D/S5 literature, we consider:
\begin{itemize}
    \item \textbf{ListOps.} A length-$2000$ hierarchical arithmetic-expression task that predicts the value of a nested operator sequence and primarily probes compositional parsing over long token streams.
    \item \textbf{Text.} Character-level IMDb sentiment classification on sequences up to roughly $4000$ characters, testing whether the model can compose sentiment evidence over long textual contexts.
    \item \textbf{Retrieval.} The AAN document-matching task, where the model receives two long documents and predicts whether they belong together; this benchmark mainly stresses long-range information compression across paired sequences.
    \item \textbf{Image.} Grayscale CIFAR-10 classification in image-as-sequence form, where the model processes the image as a long pixel stream rather than through a 2D convolutional backbone.
    \item \textbf{Pathfinder.} A binary visual reasoning task that asks whether two marked endpoints are connected by a thin path in a serialized image.
    \item \textbf{Path-X.} The longer and harder Pathfinder variant with substantially larger sequence length, designed to probe extremely long-range visual reasoning.
\end{itemize}

\paragraph{Training Setup.}
For all tasks, we compare DS4D against the S4D baseline using the same backbone depth and closely matched model size, while preserving each benchmark shell whenever possible. The shared training choices are:
\begin{itemize}
    \item \textbf{Scheduler.} We use cosine scheduler for all tasks except BIDMC, which follows the S4D paper in using a multi-step schedule.
    \item \textbf{Weight Decay.} Following the S4D paper, all tasks use weight decay $0.05$ except Pathfinder, which uses $0.03$.
    \item \textbf{Input Projection.} All runs use a trainable explicit $\boldsymbol{B}$.
    \item \textbf{Parameter Matching.} We retune the hidden size $H$ so that the number of trainable parameters stays within about $1\%$ of the corresponding S4D configuration.
    \item \textbf{SSM Backbone Initialization.} All general-sequence runs use the S4D-Inv initialization for the shared SSM backbone, since the S4D paper reports that S4D-Lin does not successfully solve Path-X.
    \item \textbf{Delay Initialization.} We use constant delay only for sMNIST and LRA Image, with $\tau \equiv 28$ and $\tau \equiv 32$ respectively; all other tasks use the default uniform-grid initialization with random channel assignment.
    \item \textbf{Learning Rates.} We keep the main learning rate aligned with the S4D recipe and use a smaller learning rate only for $\Delta t$. Unlike S4D, DS4D keeps $\boldsymbol{A}$, $\boldsymbol{A}_d$, and $\boldsymbol{B}$ on the main learning-rate group rather than assigning them a reduced learning rate.
    \item \textbf{Directionality.} Except for sMNIST and SC (AR), all general-sequence runs use bidirectional models.
\end{itemize}
Table~\ref{tab:general-delay-s4d-configs} summarizes the benchmark-specific Delay S4D configurations used for these runs. The BIDMC endpoints share the same depth, width, state size, and schedule, but differ slightly in normalization and delayed-branch initialization as shown in the table. The \textsc{Text}, \textsc{Retrieval}, and \textsc{Image} rows correspond to the char-level IMDB, AAN, and grayscale CIFAR-10 wrappers in the codebase, respectively.

\begin{table*}[htbp]
    \centering
    \footnotesize
    \setlength{\tabcolsep}{4.2pt}
    \caption{Delay S4D hyperparameters for the general sequence modeling benchmarks. 
    All runs use the S4D-Inv backbone initialization. 
    Here $H$ denotes the hidden size and $N$ denotes the per-channel state size.}
    \label{tab:general-delay-s4d-configs}
    \resizebox{\textwidth}{!}{
    \begin{tabular}{lcccccccccccc}
        \toprule
        \textsc{Task} & \textsc{Depth} & \textsc{$H$} & \textsc{$N$} & \textsc{$p_n$} & \textsc{$\omega$} & \textsc{Norm} & \textsc{Pre-Norm} & \textsc{Batch} & \textsc{LR} & \textsc{$\Delta t$ LR} & \textsc{WD} & \textsc{Epochs} \\
        \midrule
        \multicolumn{13}{c}{\textsc{Multimodal Tasks}} \\
        \midrule
        \textsc{sMNIST} & 6 & 115 & 64 & 0.015 & 0 & BN & True & 64 & $1.0\mathrm{e}{-3}$ & $1.0\mathrm{e}{-4}$ & $0.05$ & 20 \\
        \textsc{SC (AR)} & 6 & 114 & 64 & 0.1 & $\pi$ & BN & True & 16 & $1.0\mathrm{e}{-2}$ & $1.0\mathrm{e}{-4}$ & $0.05$ & 40 \\
        \textsc{SC (Bi.)} & 6 & 116 & 64 & 0.1 & $\pi$ & BN & True & 16 & $1.0\mathrm{e}{-2}$ & $1.0\mathrm{e}{-4}$ & $0.05$ & 40 \\
        \textsc{BIDMC-RR} & 6 & 101 & 256 & 0.1 & $\pi$ & LN & True & 32 & $1.0\mathrm{e}{-2}$ & $1.0\mathrm{e}{-4}$ & $0.05$ & 500 \\
        \textsc{BIDMC-HR} & 6 & 101 & 256 & 0.05 & $\pi$ & BN & True & 32 & $1.0\mathrm{e}{-2}$ & $1.0\mathrm{e}{-4}$ & $0.05$ & 500 \\
        \textsc{BIDMC-SpO2} & 6 & 101 & 256 & 0.1 & $\pi$ & BN & True & 32 & $1.0\mathrm{e}{-2}$ & $1.0\mathrm{e}{-4}$ & $0.05$ & 500 \\
        \midrule
        \multicolumn{13}{c}{\textsc{LRA Tasks}} \\
        \midrule
        \textsc{ListOps} & 8 & 116 & 64 & 0.1 & $\pi$ & BN & False & 50 & $1.0\mathrm{e}{-2}$ & $5.0\mathrm{e}{-4}$ & $0.05$ & 40 \\
        \textsc{Text} & 6 & 242 & 64 & 0.1 & $\pi$ & BN & True & 16 & $1.0\mathrm{e}{-2}$ & $1.0\mathrm{e}{-4}$ & $0.05$ & 32 \\
        \textsc{Retrieval} & 6 & 242 & 64 & 0.1 & $\pi$ & BN & True & 64 & $1.0\mathrm{e}{-2}$ & $1.0\mathrm{e}{-4}$ & $0.05$ & 20 \\
        \textsc{Image} & 6 & 497 & 64 & 0.01 & 0 & LN & False & 50 & $1.0\mathrm{e}{-2}$ & $1.0\mathrm{e}{-4}$ & $0.05$ & 200 \\
        \textsc{Pathfinder} & 6 & 243 & 64 & 0.1 & $\pi$ & BN & True & 64 & $4.0\mathrm{e}{-3}$ & $5.0\mathrm{e}{-4}$ & $0.03$ & 200 \\
        \textsc{Path-X} & 6 & 243 & 64 & 0.1 & $\pi$ & BN & True & 16 & $5.0\mathrm{e}{-4}$ & $1.0\mathrm{e}{-4}$ & $0.05$ & 50 \\
        \bottomrule
\end{tabular}
    }
\end{table*}

\paragraph{Approximate Training Time.}
Table~\ref{tab:general-delay-s4d-time} reports rough single-seed wall-clock estimates for the DS4D benchmarks measured on a single NVIDIA GeForce RTX 5090 (32GB) GPU worker.

\begin{table}[htbp]
    \centering
    \footnotesize
    \setlength{\tabcolsep}{5pt}
    \caption{Approximate training time for DS4D on the general sequence modeling benchmarks.}
    \label{tab:general-delay-s4d-time}
    \begin{tabular}{lc}
        \toprule
        \textsc{Task} & \textsc{Approx. time} \\
        \midrule
        \textsc{sMNIST} & $\sim 0.3$ h \\
        \textsc{SC (AR)} & $\sim 5.8$ h \\
        \textsc{SC (Bi.)} & $\sim 11.1$ h \\
        \textsc{BIDMC} & $\sim 2.8$ h \\
        \midrule
        \textsc{ListOps} & $\sim 2.6$ h \\
        \textsc{Text} & $\sim 1.6$ h \\
        \textsc{Retrieval} & $\sim 7.7$ h \\
        \textsc{Image} & $\sim 7.6$ h \\
        \textsc{Pathfinder} & $\sim 14.8$ h \\
        \textsc{Path-X} & $\sim 55$ h \\
        \bottomrule
    \end{tabular}
\end{table}

\subsection{External Assets and Official Links}
\label{sec:external-assets}

We rely on standard public benchmarks and official upstream codebases. 
Table~\ref{tab:external-assets} lists the primary external assets used in this paper together with their official source pages and the license or usage terms currently stated there.

\begin{table*}[htbp]
    \centering
    \footnotesize
    \setlength{\tabcolsep}{5pt}
    \caption{Primary external assets used in this work.}
    \label{tab:external-assets}
    \resizebox{\textwidth}{!}{
    \begin{tabular}{p{3.0cm}p{4.0cm}p{4.8cm}p{3.8cm}}
        \toprule
        \textsc{Asset} & \textsc{Role in this paper} & \textsc{Official source} & \textsc{License / terms stated by source} \\
        \midrule
        \textsc{state-spaces/s4} & Official upstream S4/S4D codebase used for baseline comparison and reproduction & \texttt{state-spaces/s4} (GitHub) & Apache License 2.0 \\
        \textsc{MNIST} & Data source for sMNIST & Keras MNIST dataset page & CC BY-SA 3.0 \\
        \textsc{Speech Commands} & Data source for Speech Commands benchmark & Google Speech Commands dataset card & CC BY 4.0 \\
        \textsc{BIDMC} & Data source for BIDMC RR / HR / SpO2 benchmarks & PhysioNet BIDMC dataset page & Open Data Commons Attribution License v1.0 \\
        \textsc{Long Range Arena} & Benchmark suite used for ListOps, Text, Retrieval, Image, Pathfinder, and Path-X & \texttt{google-research/long-range-arena} (GitHub) & Apache License 2.0 \\
        \bottomrule
    \end{tabular}
    }
\end{table*}

\section{Additional Results}
\label{sec:additional-results}

Figure~\ref{fig:delay-s4d-heatmaps} provides a complementary view of the six synthetic DS4D checkpoints analyzed in Table~\ref{tab:synthetic-mechanism}.
After sorting channels by learned delay, all six tasks exhibit broad diagonal bands spanning the available history horizon, consistent with DS4D preserving a distributed delay bank rather than collapsing to a single dominant lag.

\begin{figure*}[htbp]
    \centering
    \includegraphics[width=\textwidth]{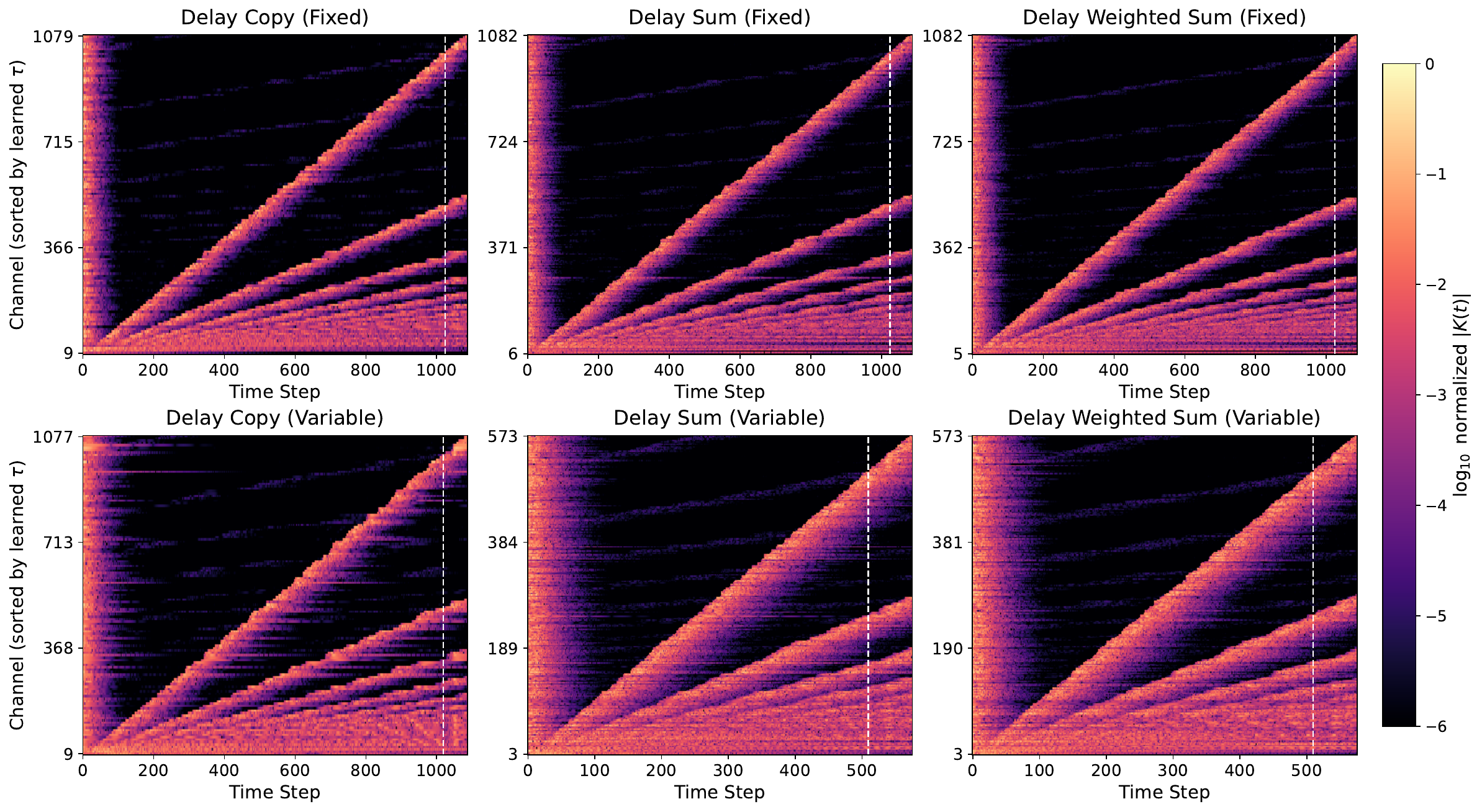}
    \caption{First-layer impulse-response heatmaps for the six synthetic DS4D checkpoints.
    Rows are channels sorted by learned delay; y-tick labels report the corresponding learned delay values after sorting.
    The white dashed line marks the padding lag $L_{\text{lag}}$.
    Colors show independently normalized $\log_{10}$ impulse-response magnitude.
    The diagonal bands indicate a distributed family of delay-sensitive channels spanning the available history range.}
    \label{fig:delay-s4d-heatmaps}
\end{figure*}

For comparison, Figure~\ref{fig:official-s4d-heatmaps} shows the corresponding first-layer impulse-response heatmaps for S4D on the same six tasks, using the best-validation checkpoint available for each task.
Sorted by empirical peak-response time rather than by a learned delay variable, these kernels are organized mainly by intrinsic response timescale and do not exhibit the same task-aligned diagonal bands.

\begin{figure*}[htbp]
    \centering
    \includegraphics[width=\textwidth]{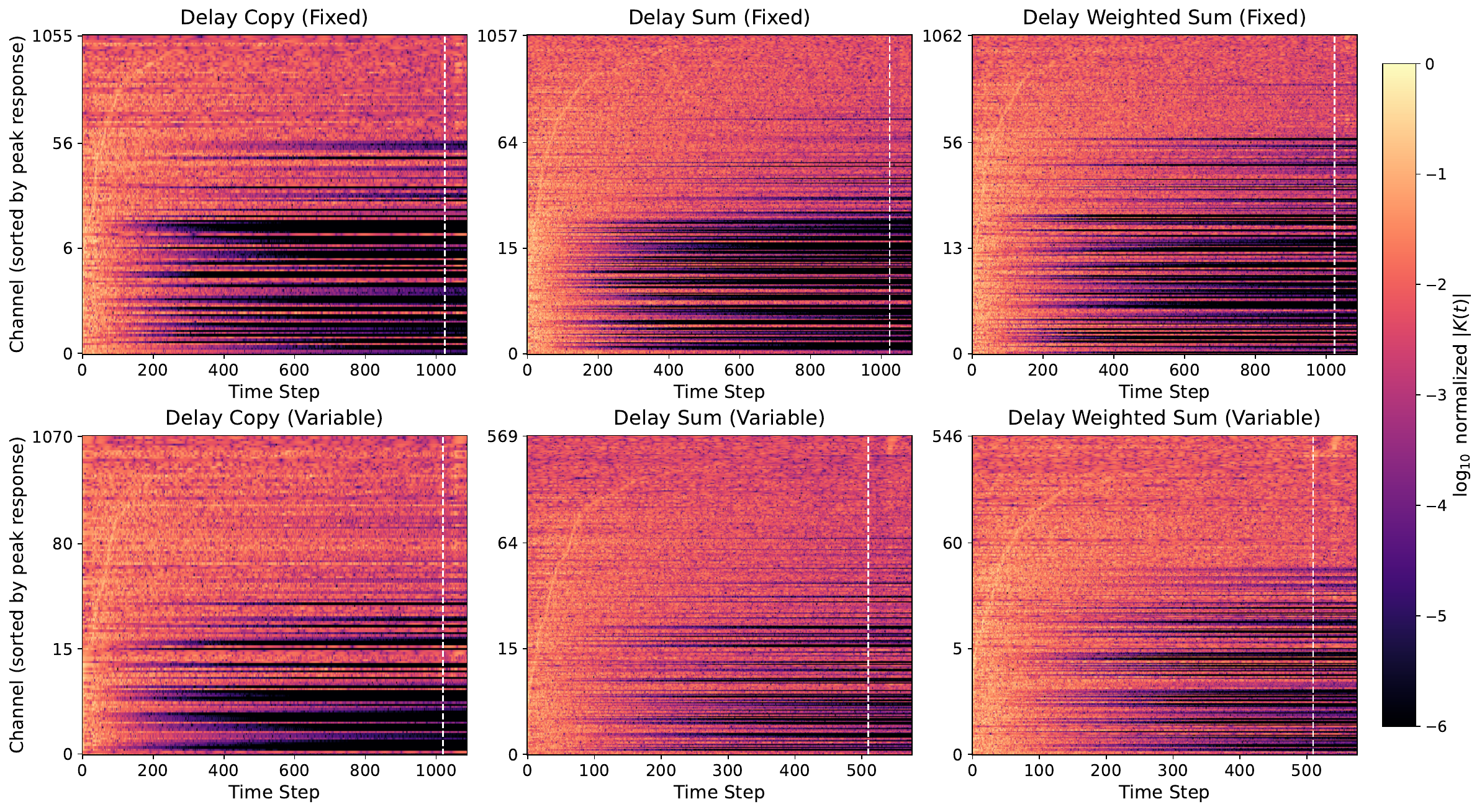}
    \caption{First-layer impulse-response heatmaps for the corresponding S4D checkpoints on the six synthetic tasks, using the best-validation checkpoint available for each task.
    Rows are channels sorted by peak-response time; y-tick labels report the corresponding peak-response times after sorting.
    The white dashed line marks the padding lag $L_{\text{lag}}$ or the center of the variable-delay interval.
    Colors show independently normalized $\log_{10}$ impulse-response magnitude.
    The predominantly horizontal organization reflects a spread of implicit timescales rather than explicit delay-aligned reinjection.}
    \label{fig:official-s4d-heatmaps}
\end{figure*}


\end{document}